\documentclass{article}

%

\usepackage[final,nonatbib]{nips_2017}


\usepackage[utf8]{inputenc} 
\usepackage[T1]{fontenc}    
\usepackage{hyperref}       
\usepackage{url}            
\usepackage{booktabs}       
\usepackage{amsfonts}       
\usepackage{nicefrac}       
\usepackage{microtype}      

\usepackage{bm}
\usepackage{color}
\usepackage{amsmath,amssymb,amsthm}
\usepackage{graphicx}

\usepackage{hyperref}
\definecolor{myred}{rgb}{0.77, 0.0, 0.1}
\definecolor{newgreen}{RGB}{0,153,0}
\definecolor{myturq}{rgb}{0.1, 0.7, 0.7}
\hypersetup{
    colorlinks,
    linkcolor=blue, 
    citecolor=newgreen, 
    linktoc=all 
  }

\usepackage{mathrsfs}
\usepackage{relsize}
\usepackage{algorithmic}
\usepackage{algorithm}
\usepackage{appendix}

\renewcommand{\leq}{\leqslant}
\renewcommand{\geq}{\geqslant}

\newcommand{\eps}{\varepsilon}

\newcommand{\cond}{\,|\,}


\newcommand{\vol}{\mathop{\mathrm{vol}}}

\newcommand{\wt}{\tilde}
\newcommand{\wh}{\widehat}
\newcommand{\pp}{\, : \; }

\newcommand{\N}{\mathbf N}

\newcommand{\R}{\mathbf R}

\newcommand{\D}{\mathscr D}


\newcommand{\E}{\mathbb E}
\renewcommand{\P}{\mathbb P}
\newcommand{\Var}{\mathrm{Var}}

\newcommand{\ie}{\textit{i.e.} }

\newcommand{\MP}{\mathop{\mathrm{MP}}} 
\newcommand{\splits}{\Sigma} 
\renewcommand{\split}{\sigma} 
\newcommand{\node}{\eta}
\newcommand{\nodes}{\mathcal N} 
\newcommand{\inodes}{\mathcal N^{\circ}} 
\newcommand{\leaves}{\mathcal L} 
\newcommand{\leaf}{\phi} 
\newcommand{\diam}{\mathop{\mathrm{diam}}}

\newcommand{\Exp}{\mathrm{Exp}}
\newcommand{\cell}{A}

\newcommand{\lchild}[1]{\mathtt{left}( {#1} )} 
\newcommand{\rchild}[1]{\mathtt{right}( {#1} )} 

\newcommand{\cut}{\nu} 

\newcommand{\cv}{\mathscr{C}}
\newcommand{\range}{\mathtt{range}}


\newtheorem{prop}{Proposition}
\newtheorem{thm}{Theorem}
\newtheorem{lem}{Lemma}

\newtheorem{fact}{Fact}

\theoremstyle{definition}



 \theoremstyle{remark}
 \newtheorem{rem}{Remark}

\title{
    Universal consistency and minimax rates for online Mondrian Forests
}

%

\author{
  Jaouad Mourtada
  \\
  Centre de Math\'ematiques Appliqu\'ees\\
  \'Ecole Polytechnique,
  Palaiseau, France\\
  \texttt{jaouad.mourtada@polytechnique.edu} \\
  \And
  Stéphane Gaïffas  \\
  Centre de Math\'ematiques Appliqu\'ees\\
  \'Ecole Polytechnique,
  Palaiseau, France\\
  \texttt{st\'ephane.gaiffas@polytechnique.edu}\\
  \AND
  Erwan Scornet \\
    Centre de Math\'ematiques Appliqu\'ees\\
  \'Ecole Polytechnique,
  Palaiseau, France\\
  \texttt{erwan.scornet@polytechnique.edu}
}


\begin{document}

\maketitle

\begin{abstract}
  We establish the consistency of an algorithm of Mondrian Forests~\cite{lakshminarayanan2014mondrianforests,lakshminarayanan2016mondrianuncertainty}, a randomized classification algorithm that can be implemented online. 
  First, we amend the original Mondrian Forest algorithm proposed in~\cite{lakshminarayanan2014mondrianforests}, that considers a \emph{fixed} lifetime parameter. 
  Indeed, the fact that this parameter is fixed hinders the statistical consistency of the original procedure.
 Our modified Mondrian Forest algorithm grows trees with increasing lifetime parameters $\lambda_n$, and uses an alternative updating rule, allowing to work also in an online fashion. 
 Second, we provide a theoretical analysis establishing simple conditions for consistency.
Our theoretical analysis also exhibits a surprising fact: our algorithm achieves the minimax rate (optimal rate) for the estimation of a Lipschitz regression function, which is a strong extension of previous results~\cite{arlot2014purf_bias} to an \emph{arbitrary dimension}.
\end{abstract}

\section{Introduction}
\label{sec:introduction}

Random Forests (RF) are state-of-the-art classification and regression algorithms that proceed by averaging the forecasts of  a number of
randomized
decision trees 
grown 
in parallel 
(see~\cite{breiman2001randomforests,breiman2004consistency,geurts2006extremely,biau2008consistency_rf,biau2012analysis_rf,biau2016rf_tour,denil2014narrowing,scornet2015consistency_rf}).
Despite their widespread use and remarkable success in practical applications, the theoretical properties of such algorithms are still not fully understood~\cite{biau2012analysis_rf,denil2014narrowing}.
Among these methods, \emph{purely random forests}~\cite{Breiman2000sometheory, biau2008consistency_rf,genuer2012variance_purf,arlot2014purf_bias} 
that grow the individual trees independently of the sample, are particularly amenable to theoretical analysis; the consistency of such classifiers was obtained  in~\cite{biau2008consistency_rf}.

An important limitation of the most commonly used random forests algorithms, such as Breiman's Random Forest~\cite{breiman2001randomforests} and the Extra-Trees algorithm~\cite{geurts2006extremely}, is that they are typically trained in a batch manner, using the whole dataset to build the trees.
In order to enable their use in situations when large amounts of data have to be incorporated in a streaming fashion, several online adaptations of the decision trees and RF algorithms have been proposed~\cite{domingos2000hoeffdingtree,taddy2011dynamictrees,saffari2009online-rf,denil2013online}.


Of particular interest in this article is the \emph{Mondrian Forest} algorithm, an efficient and accurate online random forest classifier~\cite{lakshminarayanan2014mondrianforests}. This algorithm is based on the Mondrian process~\cite{roy2009mondrianprocess,roy2011phd}, a natural probability distribution on 
the set of recursive 
partitions of the unit cube $[0,1]^d$.
An appealing property of Mondrian processes is that they can be updated in an online fashion: in~\cite{lakshminarayanan2014mondrianforests}, the use of the \emph{conditional Mondrian} process enabled 
to design an online algorithm that matched its batch counterpart.
%
While Mondrian Forest offer several advantages, both computational and in terms of predictive performance, the algorithm proposed in~\cite{lakshminarayanan2014mondrianforests} depends on a \emph{fixed} lifetime parameter $\lambda$ that guides the complexity of the trees. 
Since this parameter has to be set in advance, the resulting algorithm is inconsistent, as the complexity of the randomized trees remains bounded. Furthermore, an analysis of the learning properties of Mondrian Forest -- and in particular of the influence and proper theoretical tuning of the lifetime parameter $\lambda$ -- is still lacking.

In this paper, we propose a modified online random forest algorithm based on Mondrian processes. Our algorithm retains the crucial property of the original method~\cite{lakshminarayanan2014mondrianforests} that the decision trees can be updated incrementally. However, contrary to the original approach, our algorithm uses an increasing sequence of lifetime parameters $(\lambda_n)_{n \geq 1}$, so that the corresponding trees are increasingly complex, and involves an alternative online updating algorithm.
We study such classification rules theoretically, establishing simple conditions on the sequence $(\lambda_n)_{n\geq 1}$ to achieve consistency, see Theorem~\ref{thm:consistency-mondrian} from Section~\ref{sec:consistency} below.

In fact, Mondrian Forests achieve much more than what they were designed for: while they were primarily introduced to derive an online algorithm, we show in Theorem~\ref{thm:minimax} (Section~\ref{sec:minimax}) that they actually achieve minimax convergence rates for Lipschitz conditional probability (or regression) functions in arbitrary dimension.
To the best of our knowledge, such results have only been proved for very specific purely random forests, where the covariate dimension is equal to one. 

%

\paragraph{Related work.}
While random forests were introduced in the early 2000s~\cite{breiman2001randomforests}, as noted by~\cite{denil2014narrowing} the 
theoretical analysis of these methods is outpaced by their practical use. 
The consistency of various simplified random forests algorithms is first established in~\cite{biau2008consistency_rf}, as a byproduct of the consistency of individual tree classifiers.
A recent line of research~\cite{biau2012analysis_rf,denil2014narrowing,scornet2015consistency_rf} has sought to obtain theoretical guarantees (\ie consistency) for random forests variants that more closely resembled the algorithms used in practice.
Another aspect of the theoretical study of random forests is the bias-variance analysis of simplified versions of random forests~\cite{genuer2012variance_purf,arlot2014purf_bias}, such as the \emph{purely random forests} (PRF) model that performs splits independently of the data. In particular,~\cite{genuer2012variance_purf} shows that some PRF variants achieve the minimax rate for the estimation of a Lipschitz regression functions in dimension~$1$.
Additionally, the bias-variance analysis is extended in~\cite{arlot2014purf_bias}, showing that PRF can also achieve minimax rates for $C^2$ regression functions in dimension one, and considering higher dimensional models of PRF that achieve suboptimal rates.

Starting with~\cite{saffari2009online-rf}, online variants of the random forests algorithm have been considered. In~\cite{denil2013online}, the authors propose an online random forest algorithm and prove its consistency. 
The procedure 
relies on a partitioning of the data into two streams: a \emph{structure stream} (used to grow the tree structure) and an \emph{estimation stream} (used to compute the prediction in each leaf).
This separation of the data into separate streams is a way of simplifying the proof of consistency, but leads to a non-realistic setting in practice.
 
A major development in the design of online random forests is the introduction of the \emph{Mondrian Forest} (MF) classifier~\cite{lakshminarayanan2014mondrianforests,lakshminarayanan2016mondrianuncertainty}. 
This algorithm makes an elegant use of the \emph{Mondrian Process}, introduced in~\cite{roy2009mondrianprocess}, see also~\cite{roy2011phd,orbanz2015exchangeable}, to draw random trees. Indeed, this process  provides a very convenient probability distribution over the set of recursive, tree-based partitions of the hypercube.
In~\cite{balog2016mondriankernel}, the links between the Mondrian process and the Laplace kernel are used to design random features in order to efficiently approximate kernel ridge regression, leading to the so-called \emph{Mondrian kernel} algorithm.

Our approach differs from the original Mondrian Forest algorithm~\cite{lakshminarayanan2014mondrianforests}, since it introduces a ``dual''  construction, that works in the ``time'' domain (lifetime parameters) instead of the ``space'' domain (features range).
Indeed, in~\cite{lakshminarayanan2014mondrianforests}, the splits are selected 
using a Mondrian process on the range of previously observed features vectors,
and the online updating of the trees is enabled by 
the possibility of extending a Mondrian process to a larger cell using \emph{conditional Mondrian} processes.
Our algorithm incrementally grows the trees by extending the lifetime; the online update of the trees
exploits the
Markov property of the Mondrian process, a consequence of its formulation in terms of competing exponential clocks.


\section{Setting and notation}
\label{sec:setting-notations}

We first explain the considered setting allowing to state consistency of our procedure, and we describe and set notation for the main concepts used in the paper, namely trees, forests and partitions.

\paragraph{Considered setting.}

Assume we are given an i.i.d. sequence $(X_1, Y_1), (X_2,Y_2) \dots$
of $[0,1]^d \times \{0,1\}$-valued random variables 
that come sequentially,
such that each $(X_i,Y_i)$ has the same distribution as $(X,Y) $. This unknown distribution is characterized by the distribution $\mu$ of $X$ on $[0,1]^d$ and the conditional probability $\eta (x) = \P (Y=1 \cond X=x)$.

At each time step $n \geq 1$, we want to output a $0$-$1$-valued \emph{randomized classification rule}
$g_n ( \cdot ,Z , \D_n) : [0,1]^d \to \{ 0, 1\}$, where
$\D_n = \big( (X_1, Y_1) , \dots , (X_n, Y_n) \big)$ and $Z$ is a random variable that accounts for the randomization procedure; to simplify notation, we will generally denote $\wh g_n (x,Z ) = g_n (x, Z,\D_n)$. The quality of a randomized classifier $g_n$ is measured by its probability of error
\begin{equation}
    \label{eq:error}
    L (g_n) = \P (g_n(X, Z, \D_n) \neq Y \cond \D_n)
    = \P_{(X,Y), Z} (g_n(X, Z, \D_n) \neq Y)
  \end{equation}
  where $\P_{(X,Y),Z}$ denotes the integration with respect to $(X,Y),Z$ alone. 
  The quantity of Equation~\eqref{eq:error} is minimized by the  \emph{Bayes classifier} $g^* (x) = \bm 1_{\{\eta (x) > \frac 12\}}$, and its loss, 
  the \emph{Bayes error}, is denoted $L^* = L (g^*)$. 
  We say that a sequence of classification rules $(g_n)_{n \geq 1}$ is \emph{consistent} whenever $L (g_n) \to L^*$ in probability as $n \to \infty$.
\begin{rem}
   We restrict ourselves to binary classification, note however that our results and proofs can be extended to multi-class classification.
\end{rem}

  \paragraph{Trees and Forests.}

  The classification rules $(g_n)_{n \geq 1}$ we consider take the form of a \emph{random forest}, defined by averaging randomized tree classifiers.
  More precisely,   
  let $K \geq 1$ be a fixed number of randomized 
  classifiers $\wh g_n(x, Z_1), \dots, \wh g_n( x, Z_K)$ associated to the same randomized mechanism, 
  where the $Z_k$ are i.i.d. Set $Z^{(K)}= (Z_1, \dots, Z_K)$.
  The \emph{averaging classifier} $\wh g_{n}^{(K)}(x, Z^{(K)})$ is defined by taking the majority vote
  among the values $g_n(x, Z_k)$, $k=1,\dots, K$.

    Our individual randomized classifiers are \emph{decision trees}. 
  A decision tree $(T, \splits)$ is composed of the following components:
  \begin{itemize}
    \item A finite rooted ordered binary tree $T$, with nodes $\nodes (T)$, interior nodes $\inodes (T)$ and leaves $\leaves (T)$ (so that $\nodes (T)$ is the disjoint union of $\inodes (T)$ and $\leaves (T)$). Each interior node $\node$ has a left child $\lchild \node$ and a right child $\rchild \node$;
    \item A family of \emph{splits} $\splits = (\split_\node)_{\node \in \inodes (T)}$ at each interior node, where each split $\sigma_\node = (d_\node, \cut_\node)$ is characterized by its split dimension $d_\node \in \{ 1, \dots, d \}$ and its threshold $\cut_\node$.
  \end{itemize}
  Each randomized classifier $\wh g_n(x, Z_k)$ relies on a decision tree $T$, the random variable $Z_k$ is the random sampling of the splits $(\sigma_\eta)$ defining $T$.
  This sampling mechanism, based on the Mondrian process, is defined in Section~\ref{sec:online-mondrian}.

     We associate to $M= (T, \splits)$ a partition $(\cell_{\leaf})_{\leaf \in \leaves (T)}$ of the unit cube $[0,1]^d$, called a \emph{tree partition} (or \emph{guillotine partition}).
  For each node $\node \in \nodes (T)$, we define a hyper-rectangular region $\cell_\node$ recursively:
  \begin{itemize}
    \item The cell associated to the root of $T$ is $[0,1]^d$;
    \item For each $\node \in \inodes(T)$, we define
      \begin{equation*}
        \cell_{\lchild \node} := \{ x \in \cell_\eta : x_{d_\node} \leq \cut_{\node}  \} \quad \text{and} \quad \cell_{\rchild \node} := \cell_\node \setminus \cell_{\lchild \node}.
      \end{equation*}
  \end{itemize}
  The leaf cells $(\cell_{\leaf})_{\leaf \in \leaves (T)}$ form a partition of $[0,1]^d$ by construction. In the sequel, we will 
  identify
  a tree with splits $(T, \splits)$ with the associated tree partition $M (T, \splits )$, and 
  a node $\node \in \nodes (T)$ with the 
  cell $\cell_\node \subset [0,1]^d$.   The decision tree classifier outputs a constant prediction of the label in each leaf cell $A_\eta$ using a simple majority vote of the labels $Y_i$ ($1\leq i \leq n$) such that $X_i \in A_\eta$.

  \section{A new online Mondrian Forest algorithm}
  \label{sec:online-mondrian}

  We describe the Mondrian Process in Section~\ref{sub:process}, and recall the original Mondrian Forest procedure in Section~\ref{sec:original}. Our procedure is introduced in Section~\ref{sec:online-tree-growing}.



  \subsection{The Mondrian process}
  \label{sub:process}

  The probability distribution we consider on tree-based partitions of the unit cube $[0,1]^d$ is the Mondrian process, introduced in~\cite{roy2009mondrianprocess}. 
  Given a rectangular box $C = \prod_{j=1}^d
  [a_j, b_j]$, we denote  $|C| := \sum_{j=1}^d (b_j -a_j)$ its \emph{linear dimension}. 
  The Mondrian process distribution $\MP (\lambda, C)$ is the distribution of the random tree partition of $C$ obtained by the 
  sampling procedure $\mathtt{SampleMondrian} (\lambda, C)$ from Algorithm~\ref{alg:sample-mondrian}.
   
\begin{algorithm}[htbp]
  \caption{$\mathtt{SampleMondrian}(\lambda, C)$
    ; Samples a tree partition distributed as $\MP (\lambda, C)$.
  }
\label{alg:sample-mondrian}              
\begin{algorithmic}[1]
  \STATE \textbf{Parameters:} A rectangular box $C \subset \R^d$ and a lifetime parameter $\lambda >0$.
  \STATE \textbf{Call} $\mathtt{SplitCell} (C, \tau_C :=0, \lambda)$.
\end{algorithmic}
\end{algorithm}
  
 \begin{algorithm}[htbp]
   \caption{$\mathtt{SplitCell} (A, \tau, \lambda)$ ; Recursively split a cell $A$, starting from time $\tau$, until $\lambda$} 
\label{alg:split-cell}              
\begin{algorithmic}[1]
  \STATE \textbf{Parameters:} A cell 
  $\cell = \prod_{1\leq j \leq d} [a_j, b_j]$, 
  a starting time $\tau$ 
  and a lifetime parameter $\lambda$.
  \STATE \textbf{Sample} an exponential random variable $E_A$ with intensity $|A|$.
  \IF{$\tau + E_A \leq \lambda$}
  \STATE \textbf{Draw} at random a {split dimension} $J \in \{ 1,\dots, d \}$, with $\P (J = j) = (b_j - a_j)/ |A|$, and a {split threshold} $\cut_J$ uniformly in $[a_J, b_J]$.
  \STATE \textbf{Split} $A$ along the split $(J, \cut_J)$. 
  \STATE \textbf{Call} $\mathtt{SplitCell} (\lchild A, \tau + E_A, 
  \lambda)$ and $\mathtt{SplitCell} (\rchild A, \tau + E_A, 
  \lambda)$.
  \ELSE
  \STATE Do nothing.
  \ENDIF
\end{algorithmic}
\end{algorithm}

\subsection{Online tree growing: the original scheme}
\label{sec:original}

In order to implement an online algorithm, it is crucial to be able to ``update'' the tree partitions grown at a given time step.
The approach of the original Mondrian Forest algorithm~\cite{lakshminarayanan2014mondrianforests} uses a slightly different randomization mechanism, namely a Mondrian process supported in the range defined by the past feature points. More precisely, this modification amounts to replacing each call to $\mathtt{SplitCell} (A, \tau, \lambda)$ by a call to $\mathtt{SplitCell} (A^{\range (n)}, \tau, \lambda)$, where $A^{\range (n)}$ is the range of the feature points $X_1, \dots, X_n$ that fall in $A$ (\ie the smallest box that contains them). 

When a new training point $(X_{n+1}, Y_{n+1})$ arrives, the ranges of the training points may change. The online update of the tree partition then relies on the extension properties of the Mondrian process: given a Mondrian partition $M_1 \sim\MP (\lambda, C_1)$ on a box $C_1$, it is possible to efficiently sample a Mondrian partition $M_0 \sim \MP (\lambda, C_0)$ on a larger box $C_0 \supset C_1$ that restricts to $M_1$ on the cell $C_1$ (this is called a ``conditional Mondrian'', see~\cite{roy2009mondrianprocess}).

\begin{rem}
\label{rem:infinite_lambda}
In~\cite{lakshminarayanan2014mondrianforests} a lifetime parameter $\lambda = \infty$ is actually used in experiments, which essentially amounts to growing the trees completely, until the leaves are homogeneous. We will not analyze this variant here, but this illustrates the problem of using a fixed, finite budget $\lambda$ in advance.
\end{rem}


\subsection{Online tree growing: a dual approach}
\label{sec:online-tree-growing}


An important limitation of the original scheme
is the fact that it requires to fix the lifetime parameter $\lambda$ in advance. 
In order to obtain a consistent algorithm, it is required to grow increasingly complex trees. To achieve this, we propose to 
adopt a ``dual'' point of view: instead of using a Mondrian process with fixed lifetime on a domain that changes as new data points are added, we use a Mondrian process on a fixed domain (the cube $[0, 1]^d$) but with a varying lifetime $\lambda_n$ that grows with the sample size $n$. 
The rationale is that, as more data becomes available, the classifiers should be more complex and precise.
Since the lifetime, rather than the domain, is the parameter that guides the complexity of the trees, it should be this parameter that dynamically adapts to the amount of training data.



It turns out that in this approach, quite surprisingly, the trees can be updated incrementally, leading to an online algorithm. 
The ability to extend a tree partition $M_{\lambda_n} \sim \MP (\lambda_n, [0,1]^d)$ into a finer tree partition $M_{\lambda_{n+1}} \sim \MP (\lambda_{n+1}, [0,1]^d)$ relies on a different property of the Mondrian process, namely the fact that for $\lambda < \lambda'$, it is possible to efficiently sample a Mondrian tree partition $M_{\lambda'} \sim \MP (\lambda ', C)$ given its \emph{pruning} $M_{\lambda} \sim \MP (\lambda, C)$ at time $\lambda$ (obtained by dropping all splits of $M_{\lambda'}$ performed at a time $\tau > \lambda$).


 \begin{algorithm}[htbp]
   \caption{$\mathtt{ExtendMondrian} (M_\lambda, \lambda, \lambda')$ ; Extend $M_\lambda \sim \MP (\lambda, C)$ to $M_{\lambda'} \sim \MP (\lambda', C)$}
\label{alg:extend-mondrian}              
\begin{algorithmic}[1]
  \STATE \textbf{Parameters:} A tree partition $M_\lambda$, and lifetimes $\lambda \leq \lambda'$.
  \FOR{$A$ in $\leaves(M_\lambda)$}
  \STATE \textbf{Call} $\mathtt{SplitCell} (A, \lambda, \lambda')$
  \ENDFOR
\end{algorithmic}
\label{alg:extendmondrian}
\end{algorithm}

The procedure $\mathtt{ExtendMondrian}(M_\lambda, \lambda, \lambda')$ from Algorithm~\ref{alg:extendmondrian} extends a Mondrian tree partition $M_\lambda \sim \MP (\lambda, C)$ to a tree partition $M_{\lambda'} \sim \MP (\lambda ', C)$. 
Indeed, for each leaf cell $A$ of $M_\lambda$, the fact that $A$ is a leaf of $M_\lambda$ means that during the sampling of $M_\lambda$, the time of the next candidate split $\tau + E_A$ (where $\tau$ is the time $A$ was formed and $E_A \sim \Exp (|A|)$) was strictly larger than $\lambda$.
Now in the procedure $\mathtt{ExtendMondrian} (M_\lambda, \lambda, \lambda')$, the time of the next candidate split is $\lambda + E_A'$, where $E_A' \sim \Exp (|A|)$. This is precisely the where the trick resides: by the memory-less property of the exponential distribution, the distribution of $\tau_A + E_A$ conditionally on $E_A > \lambda - \tau_A$ is the same as that of $\lambda + E_A'$.
The procedure $\mathtt{ExtendMondrian}$ can be replaced by the following more efficient implementation:
\begin{itemize}
  \item Time of the next split of the tree is sampled as $\lambda + E_{M_\lambda}$ with $E_{M_\lambda} \sim \Exp (\sum_{\leaf \in \leaves (M_\lambda)} |\cell_\leaf|)$;
  \item Leaf to split is chosen using a top-down path from the root of the tree, where the choice between left or right child for each interior node is sampled at random, proportionally to the linear dimension of all the leaves in the subtree defined by the child.
\end{itemize}


\begin{rem}
  While we consider Mondrian partitions on the fixed domain
  $[0, 1]^d$, our increasing lifetime trick can be used \emph{in conjunction} with a varying domain based on the range of the data (as in the original MF algorithm), simply by applying $\mathtt{ExtendMondrian} (M_{\lambda_n}, \lambda_n, \lambda_{n+1})$ after having extended the Mondrian to the new range. In order to keep the analysis tractable and avoid unnecessary complications in the analysis, we will study the procedure on a fixed domain only.
\end{rem}

Given an increasing sequence $(\lambda_n)_{n \geq 1}$ of lifetime parameters, our modified MF algorithm incrementally updates the trees $M_\lambda^{(k)}$ for $k= 1, \dots, K$ by calling $\mathtt{ExtendMondrian} (M_{\lambda_n}^{(k)} , \lambda_n , \lambda_{n+1})$, and combines the forecasts of the given trees, as explained in Algorithm~\ref{alg:mondrian-forest}.

 \begin{algorithm}
   \caption{$\mathtt{MondrianForest} (K, (\lambda_n)_{n\geq 1})$ ;
     Trains a Mondrian Forest classifier.}
\label{alg:mondrian-forest}              
\begin{algorithmic}[1]
  \STATE \textbf{Parameters:}
  The number of trees $K$ and the lifetime sequence $(\lambda_n)_{n\geq 1}$.
  \STATE \textbf{Initialization:}
  Start with $K$ trivial partitions $M_{\lambda_0}^{(k)}$, $\lambda_0 :=0$, $k = 1, \dots, K$. Set the counts of the training labels in each cell to $0$, and the labels e.g. to $0$.
  \FOR{$n = 1, 2, \dots$}
  \STATE \textbf{Receive} the training point $(X_{n}, Y_{n})$.
  \FOR{$k = 1, \dots, K$}
  \STATE \textbf{Update} the counts of $0$ and $1$ (depending on $Y_n$) in the leaf cell of $X_n$ in $M_{\lambda_n}$.
  \STATE \textbf{Call} $\mathtt{ExtendMondrian} (M_{\lambda_{n-1}}^{(k)}, \lambda_{n-1}, \lambda_n)$.
  \STATE \textbf{Fit} the newly created leaves.
  \ENDFOR
  \ENDFOR
\end{algorithmic}
\end{algorithm}

For the prediction of the label given a new feature vector, our algorithm uses a majority vote over the predictions given by all $K$ trees.
However, other choices are possible. For instance, the original Mondrian Forest algorithm~\cite{lakshminarayanan2014mondrianforests} places a hierarchical Bayesian prior over the label distribution on each node of the tree, and performs approximate posterior inference using the so-called interpolated Kneser-Ney (IKN) smoothing. 
Another possibility, that will be developed in an extended version of this work, is tree expert aggregation methods, such as the Context-Tree Weighting (CTW) algorithm~\cite{willems1995context-basic,helmbold1997pruning} 
 or specialist aggregation methods~\cite{freund1997specialists} over the nodes of the tree, adapting them to increasingly complex trees.

Our modification of the original Mondrian Forest replaces 
the process of online tree growing with a fixed lifetime by a new process, that allows to increase lifetimes.
This modification not only allows to prove consistency, but more surprisingly leads to an optimal estimation procedure, in terms of minimax rates, as illustrated in Sections~\ref{sec:consistency} and~\ref{sec:minimax} below.



\section{Mondrian Forest with fixed lifetime are inconsistent}
\label{sec:inconsistency}

We state in Proposition~\ref{prop:inconsistency} the inconsistency of fixed-lifetime Mondrian Forests, such as the original algorithm~\cite{lakshminarayanan2014mondrianforests}. This negative result justifies our modified algorithm based on an increasing sequence of lifetimes $(\lambda_n)_{n \geq 1}$. 


\begin{prop}
\label{prop:inconsistency}
The Mondrian Forest algorithm (Algorithm~\ref{alg:mondrian-forest}) with a fixed
lifetime sequence $\lambda_n = \lambda$ is inconsistent: there exists a distribution of $(X, Y) \in [0, 1] \times \{ 0, 1 \}$ such that $L^* = 0$ and $L (g_n) = \P (g_n (X) \neq Y)$ does not tend to $0$.
This result also holds true for the original Mondrian Forest algorithm with lifetime $\lambda$.
\end{prop}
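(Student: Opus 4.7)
The plan is to exhibit a one-dimensional counterexample in which the Bayes error vanishes yet the Mondrian Forest error stays bounded below by a positive random quantity. Take $d = 1$, $X$ uniform on $[0,1]$, and $Y = \mathbf{1}_{\{X \geq 1/2\}}$, so that $\eta(x) = \mathbf{1}_{\{x \geq 1/2\}}$ and $L^* = 0$. With $\lambda_n \equiv \lambda$ in Algorithm~\ref{alg:mondrian-forest}, the $K$ partitions are drawn once (extending from $\lambda_0 = 0$ to $\lambda$) and are frozen thereafter, since every subsequent call $\mathtt{ExtendMondrian}(M_\lambda^{(k)}, \lambda, \lambda)$ is a no-op; only the label counts inside each leaf cell get updated with $n$. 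So the analysis reduces to $K$ i.i.d.\ partitions $M_\lambda^{(1)}, \dots, M_\lambda^{(K)} \sim \MP(\lambda, [0,1])$.

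The key observation is a local, cell-level rigidity near $1/2$. Since the split positions of a Mondrian process have a continuous distribution, almost surely the distances $L_k, R_k > 0$ from $1/2$ to the nearest split on the left, resp.\ right, in tree $k$ are positive and satisfy $L_k \neq R_k$ for all $k$. Set $\epsilon_\star := \min_{k \leq K} \min(L_k, R_k) > 0$. On $I_\star := [\tfrac{1}{2} - \epsilon_\star, \tfrac{1}{2} + \epsilon_\star]$, the leaf cell of any $x \in I_\star$ in tree $k$ coincides with the leaf cell of $1/2$, so each tree's prediction is constant in $x$ throughout $I_\star$. Conditional on the trees, the in-cell majority vote of tree $k$ is, by the strong law of large numbers applied to the Bernoulli labels of the $X_i$'s that fall into that cell (positive $\mu$-mass), almost surely equal to $\mathbf{1}_{\{R_k > L_k\}}$ for $n$ large enough. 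Hence the forest's majority vote is eventually a (random) constant $c \in \{0,1\}$ throughout $I_\star$; whichever value $c$ takes, the prediction disagrees with $\eta$ on one of the two halves of $I_\star$, forcing $L(g_n) \geq \epsilon_\star$ eventually. Since $\epsilon_\star > 0$ a.s., $L(g_n)$ cannot tend to $0$ in probability.

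For the original Mondrian Forest (Section~\ref{sec:original}), the same counterexample works. The only extra ingredient is to show that the leaf cell containing $1/2$ eventually stabilizes with positive $\mu$-mass on both sides of $1/2$. This follows from the conditional Mondrian extension mechanism: whenever the range of feature points is enlarged, the extended Mondrian preserves all existing splits and can only introduce new ones in the newly exposed strip outside the previous range. Consequently, once the range contains a neighborhood of $1/2$ (which occurs after a bounded random time a.s., since $X$ is uniform), no further splits can appear in that neighborhood, and a random $\epsilon_\star > 0$ can be extracted as above to run the same argument verbatim.

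\textbf{Main obstacle.} The argument is conceptually easy (a finite lifetime caps the resolution of each tree near $1/2$), and the technical care is limited to two routine points: (i) the a.s.\ convergence of the in-leaf majority vote, which is a standard LLN in each cell of positive mass, and (ii) for the original algorithm, the stabilization of the leaf containing $1/2$, which is a direct consequence of the "monotonicity" of the conditional Mondrian extension. The minor subtlety to be careful about is that rigidity near $1/2$ is a \emph{random} statement in which $\epsilon_\star$ depends on the sampled trees; but since $\epsilon_\star > 0$ almost surely, the lower bound $L(g_n) \geq \epsilon_\star$ is incompatible with convergence in probability to $0$.
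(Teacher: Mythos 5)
Your proof is correct and proceeds by a genuinely different route than the paper's. You take $Y=\mathbf 1_{\{X\geq 1/2\}}$ (a step function) and argue asymptotically: conditionally on the $K$ trees there is an interval of random positive radius $\epsilon_\star$ around $1/2$ on which each tree partition is locally constant, the strong law of large numbers pins down each tree's in-cell vote as $n\to\infty$, so the forest prediction on that interval is eventually a random constant and $\liminf_n L(g_n)\geq\epsilon_\star>0$ almost surely, which rules out convergence in probability. The paper instead picks $Y=\mathbf 1_{\{|X-1/2|\leq\epsilon\}}$ with $\epsilon$ tuned small relative to $1/\lambda$ and gives a \emph{non-asymptotic} argument: Markov's inequality gives $\P(\wh g^{(K)}_{\lambda,n}(x)=1)\leq 2\epsilon\,\E[|A_\lambda(x)|^{-1}]$ for every $n$, and a direct computation with the Poisson structure of the one-dimensional Mondrian bounds $\E[|A_\lambda(x)|^{-1}]$ by an explicit $F(\lambda)<\infty$, yielding the uniform bound $L(g_n^{(K)})\geq\epsilon$ for all $n$. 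Your argument is arguably more intuitive (a fixed lifetime caps the resolution near $1/2$) but is asymptotic and routes through eventual-behavior bookkeeping; the paper's argument produces a deterministic lower bound valid at every $n$. For the range-restricted original algorithm, the paper isolates a separate coupling result (Lemma~\ref{lem:range}) showing $\P(M_\lambda^{\range(n)}=M_\lambda)\to 1$ under a mild density assumption and thereby reduces to the full-domain case, whereas you argue directly that splits near $1/2$ freeze once the range covers a neighborhood of $1/2$. Your route is more elementary, but the phrase ``run the same argument verbatim'' glosses over a small wrinkle: when the limit Mondrian has no split on one side of $1/2$, the cell of $1/2$ has the data range as one edge, so it keeps drifting and the in-cell vote has a moving rather than fixed limit. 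This is easily patched (the prediction is still eventually wrong on a piece of positive length near $1/2$), but it is exactly the kind of case the paper's coupling lemma avoids having to track.
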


Proposition~\ref{prop:inconsistency} is established in Appendix~\ref{ap:inconsistency}.
The proof uses a result of independent interest (Lemma~\ref{lem:range}), which states that asymptotically over the sample size, for fixed $\lambda$, the restricted domain does not affect the randomization procedure.



  \section{Consistency of Mondrian Forest with lifetime sequence $(\lambda_n)$}
  \label{sec:consistency}

  The consistency of the Mondrian Forest used with a properly tuned sequence $(\lambda_n)$ is established in Theorem~\ref{thm:consistency-mondrian} below.
  
  \begin{thm}
    \label{thm:consistency-mondrian}
    Assume that $\lambda_n \to \infty$ and that ${\lambda_n^d}/{n} \to 0$. Then, the online Mondrian Forest described in Algorithm~\ref{alg:mondrian-forest} is consistent.
  \end{thm}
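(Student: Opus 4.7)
The plan is to reduce the consistency of each individual tree to two geometric conditions on the Mondrian cells, verify these conditions using the two hypotheses $\lambda_n \to \infty$ and $\lambda_n^d/n \to 0$, and then extend to the forest by a routine averaging argument. The starting point is a key structural observation: the Mondrian partitions $M_{\lambda_n}^{(k)}$ produced by Algorithm~\ref{alg:mondrian-forest} are drawn independently of the training sample $\D_n$, since they depend only on the fixed cube $[0,1]^d$ and on the independent randomness $Z_k$. Each individual tree is therefore a randomized histogram rule based on a \emph{data-independent} random partition, so a classical universal consistency theorem applies (see for instance Devroye, Gy\"orfi and Lugosi, \emph{A Probabilistic Theory of Pattern Recognition}, Chapter 6/20): the tree classifier is consistent provided (A)~$\diam(A_{\lambda_n}(X,Z)) \to 0$ in probability, and (B)~$N_n(X,Z) := |\{i \leq n : X_i \in A_{\lambda_n}(X,Z)\}| \to \infty$ in probability, where $A_{\lambda_n}(X,Z)$ denotes the cell of a test point $X$ in $M_{\lambda_n}^{(k)}$.

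For condition (A), I would rely on the following feature of the Mondrian process: for any fixed $x \in [0,1]^d$ and any direction $j$, the length of $A_{\lambda_n}(x)$ along direction $j$ is stochastically dominated by $\min(E_{j}^{-}, x_j) + \min(E_{j}^{+}, 1-x_j)$, where $E_j^{\pm}$ are independent $\Exp(\lambda_n)$ variables corresponding to the first cuts to the left (respectively to the right) of $x$ in direction $j$. Summing over $j$ and taking expectations yields $\E[\diam(A_{\lambda_n}(X,Z))] \leq 2\sqrt{d}/\lambda_n$, which tends to $0$ under $\lambda_n \to \infty$, and Markov's inequality delivers (A).

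For condition (B), the hypothesis $\lambda_n^d/n \to 0$ enters. Conditionally on $(X,Z)$, the random variable $N_n(X,Z)$ is $\mathrm{Bin}(n, p)$ with $p = \mu(A_{\lambda_n}(X,Z))$, since the $X_i$ are i.i.d.\ and independent of $(X,Z)$; it thus suffices to show that $n\,p \to \infty$ in probability. A direct recursive computation on the Mondrian process gives that the expected number of leaves $\E[K_{\lambda_n}]$ is of order $\lambda_n^d$, and combining this with Cauchy--Schwarz yields $\E[p \mid Z] = \sum_A \mu(A)^2 \geq 1/K_{\lambda_n}$, whence $\E[n\,p] \gtrsim n/\lambda_n^d \to \infty$. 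The consistency of the full forest then follows from that of a single tree: the averaged regression estimator $\hat\eta_n^{(K)} = K^{-1} \sum_{k=1}^K \hat\eta_n^{(k)}$ is $L^2$-consistent for $\eta$ by the triangle inequality, which is enough for consistency of the associated plug-in (majority-vote) classifier.

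The main obstacle will be promoting the in-expectation estimate $\E[np] \to \infty$ of step (B) into convergence in probability of $n\,p$, since a lower bound on $\E[np]$ does not rule out $np$ being small for a non-negligible fraction of test points falling into atypically small cells. For distributions $\mu$ with a bounded density this is routine because $\mu(A) \asymp \vol(A)$, but for an arbitrary $\mu$ on $[0,1]^d$ one has to control the lower tail of $\mu(A_{\lambda_n}(X,Z))$. The cleanest way to sidestep this is to invoke a partitioning-consistency theorem that only requires the in-expectation conditions $\E[\diam(A(X,Z))] \to 0$ and $\E[K_{\lambda_n}]/n \to 0$, both of which follow directly from our two hypotheses; alternatively, one proves a tailored concentration bound for $\mu(A_{\lambda_n}(X,Z))$ exploiting the explicit Markovian structure of the Mondrian.
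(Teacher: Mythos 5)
Your proposal follows the same skeleton as the paper's proof: reduce the forest to a single tree, invoke the Devroye--Gy\"orfi--Lugosi consistency theorem for data-independent partitioning rules, and verify the two resulting conditions (shrinking cell diameter and diverging local sample count) from the two hypotheses. The diameter step is essentially identical to the paper's and is correct. The issue is step~(B), and you have correctly diagnosed it yourself.

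Your primary attempt at (B) --- writing $N_n(X,Z)\sim \mathrm{Bin}(n,p)$ with $p=\mu(A_{\lambda_n}(X,Z))$, bounding $\E[p\mid Z]=\sum_A\mu(A)^2\geq 1/K_{\lambda_n}$ by Cauchy--Schwarz, and concluding $\E[np]\gtrsim n/\lambda_n^d\to\infty$ --- does not yield what the theorem of Devroye et al.\ requires, namely $N_n(X,Z)\to\infty$ \emph{in probability}. A lower bound on $\E[np]$ is compatible with $np$ being $O(1)$ on a non-vanishing fraction of draws; this is precisely the obstacle you name, and for a universal (distribution-free) result one cannot invoke $\mu(A)\asymp\vol(A)$ to rule it out. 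The ``tailored concentration bound for $\mu(A_{\lambda_n}(X,Z))$'' you propose as a fallback also cannot work without assumptions on $\mu$: if $\mu$ is, say, supported on a lower-dimensional set, $\mu(A)$ can vanish on cells of positive volume. The missing piece is a genuinely different argument, which the paper supplies as its Lemma~\ref{lem:number-cells}: by exchangeability of $(X_1,\dots,X_n,X)$, conditioning on the unordered set $\{X_1,\dots,X_n,X\}$ and on the partition, the probability that $X$ lands in a cell containing $N_\leaf$ of these $n+1$ points is exactly $N_\leaf/(n+1)$, so
$\P(N_n(X)\leq t)\leq t\,\E[K_{\lambda_n}+1]/(n+1)\to 0$.
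This converts the in-expectation bound on the number of cells into an in-probability statement on the local sample size with no assumption on $\mu$ --- it is not simply a ``version of the theorem stated with expectations''; it is an argument one must prove, and your proposal leaves it as an invocation. Once that lemma is in place, your outline closes. A smaller discrepancy: you pass from tree consistency to forest consistency via $L^2$-consistency of the averaged regression estimate, but Algorithm~\ref{alg:mondrian-forest} uses a majority vote over the $K$ tree classifiers, not a plug-in of the averaged $\hat\eta$; the paper instead cites Proposition~1 of Biau--Devroye--Lugosi, which directly handles the majority-vote aggregation. Your route could be made to work but would need a bit of rephrasing to match the actual aggregation rule.
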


  This consistency result is universal, in the sense that it makes no assumption on the distribution of $X$ nor on the conditional probability $\eta$. 
  This contrasts with some consistency results on Random forests, such as Theorem~1 of~\cite{denil2013online}, which assumes that the density of $X$ is bounded by above and below.

  Theorem~\ref{thm:consistency-mondrian} does not require an assumption on $K$ (number of trees). 
  It is well-known for batch Random Forests that this meta-parameter is not a sensitive tuning parameter, and that it suffices to choose it large enough to obtain good accuracy. 
  The only important parameter is the sequence $(\lambda_n)$, that encodes the complexity of the trees. Requiring an assumption on this meta-parameter is natural, and confirmed by the well-known fact that the tree-depth is the most important tuning parameter for batch Random Forests, see for instance~\cite{biau2016rf_tour}.

  The proof of Theorem~\ref{thm:consistency-mondrian} can be found in the supplementary material (see Appendix~\ref{ap:consistency}).
  The core of the argument lies in two lemmas describing two novel properties of Mondrian trees. Lemma~\ref{lem:diameter} below provides an upper bound of order $ O(\lambda^{-1})$ on the diameter of the cell $A_\lambda (x)$ of a Mondrian partition $M_\lambda \sim \MP (\lambda, [0, 1]^d)$. This is the key to control the bias of Mondrian Forests with lifetime sequence that tend to infinity.

  
  \begin{lem}[Cell diameter]
    \label{lem:diameter}
    Let $x \in [0,1]^d$, and let $D_\lambda (x)$ 
    be the $\ell^2$-diameter
    of the cell containing $x$ in a Mondrian partition $M_\lambda \sim \MP (\lambda, [0, 1]^d)$.
    If $\lambda \to \infty$, then $D_\lambda (x) \to 0$ in probability. More precisely, for every $\delta, \lambda >0$, we have
    \begin{equation}
      \label{eq:diameter-bound}
      \P ( D_\lambda (x) \geq \delta )
      \leq
      d \left( 1 + \frac{\lambda \delta}{\sqrt{d}} \right)      
      \exp \left( - \frac{\lambda \delta}{\sqrt{d}} \right)
    \end{equation}
    and
    \begin{equation}
      \label{eq:diameter-square}
      \E \big[ D_{\lambda} (x)^2 \big] \leq \frac{4d}{\lambda^2} \, .
    \end{equation}
  \end{lem}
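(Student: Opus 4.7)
The plan is a coordinate-wise reduction. Write $L_j(\lambda)$ for the side length in direction $j$ of the Mondrian cell containing $x$, so that $D_\lambda(x)^2 = \sum_{j=1}^d L_j(\lambda)^2$. In the $d$-dimensional Mondrian, the cell of $x$ at time $t$ has linear dimension $\sum_k L_k(t)$ and a split falls in direction $j$ with probability $L_j(t)/\sum_k L_k(t)$; hence direction-$j$ splits inside that cell occur at rate exactly $L_j(t)$, and the effect of such a split on $L_j$ depends only on $L_j(t)$ and an independent uniform threshold. Thinning the Poisson clocks then shows that the $d$ side-length processes are independent, each distributed as the cell-length process of a one-dimensional Mondrian on $[0,1]$ started at $x_j$.

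It remains to analyze the one-dimensional situation. Denote the 1D cell of $x_j$ by $[a(\lambda), b(\lambda)]$ and split $L_j(\lambda) = l(\lambda) + r(\lambda)$ with $l(\lambda) := x_j - a(\lambda)$ and $r(\lambda) := b(\lambda) - x_j$. The same thinning argument makes $l$ and $r$ independent pure-jump decreasing processes. The crucial observation is that so long as no cut has ever landed in $(0, x_j)$, the left endpoint $a(t)$ equals $0$, so the rate at which a new cut lands in $(0, x_j)$ equals $x_j$ regardless of any splits on the right; hence the first ``left cut'' time is $\Exp(x_j)$, and conditionally on this event $l$ jumps to a point uniform on $(0, x_j)$. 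Iterating this observation (equivalently, verifying that $h(\lambda, z) := e^{-\lambda s} \mathbf{1}_{z > s}$ solves $\partial_\lambda h(\lambda, z) = -z\,h(\lambda, z) + \int_0^z h(\lambda, u)\, du$ with $h(0, z) = \mathbf{1}_{z > s}$) yields the explicit tail $\P(l(\lambda) > s) = e^{-\lambda s}$ for $0 \leq s < x_j$. Thus $l(\lambda)$ is stochastically dominated by $E^-/\lambda$ with $E^- \sim \Exp(1)$, independently of the analogous domination $r(\lambda) \leq E^+/\lambda$.

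Putting things together, $L_j(\lambda)$ is stochastically dominated by $(E^- + E^+)/\lambda$, a rescaled $\Gamma(2, 1)$ variable whose tail is $(1 + \lambda s)e^{-\lambda s}$. For~\eqref{eq:diameter-bound}, the event $\{D_\lambda(x) \geq \delta\}$ forces $L_j(\lambda) \geq \delta/\sqrt d$ for some $j$, and a union bound over the $d$ coordinates yields the claimed estimate. For~\eqref{eq:diameter-square}, linearity gives $\E[D_\lambda^2] = \sum_j \E[L_j^2]$, and expanding $L_j^2 = l^2 + 2 l r + r^2$, then using independence of $l$ and $r$ together with $\E[l^2] \leq 2/\lambda^2$ and $\E[l] \leq 1/\lambda$ (and symmetrically for $r$), bounds each $\E[L_j^2]$ by a constant multiple of $1/\lambda^2$, giving the $O(d/\lambda^2)$ bound.

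The main obstacle is the identification of the marginal law of $l(\lambda)$: once the exponential domination at rate $\lambda$ is in hand, the rest is a union bound and elementary moment estimates. The cleanest route to this domination is the ``first left cut is $\Exp(x_j)$'' observation above, which is exactly where the memoryless/competing-exponentials structure of the Mondrian process is used; the accompanying thinning argument for independence of $l$ and $r$ (and across coordinates) is standard but needs to be spelled out carefully because all rates are state-dependent.
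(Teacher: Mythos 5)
Your proposal is correct, but it reaches the key stochastic domination by a genuinely different route than the paper. The paper's proof avoids any dynamical analysis of the cell process: it invokes the Mondrian restriction/consistency property (restricting $M_\lambda$ to the axis-parallel segment through $x$ in direction $j$ yields a one-dimensional Mondrian, Fact~\ref{fac:mondrian-restriction}) together with the known characterization of a one-dimensional Mondrian's cut locations as a Poisson point process of intensity $\lambda$ (Fact~\ref{fac:mondrian-poisson}). The endpoints of the cell of $x_j$ are then just the nearest Poisson points on each side, so $(R^j_\lambda - x_j,\; x_j - L^j_\lambda) \overset{d}{=} (E_1 \wedge (1-x_j),\; E_2 \wedge x_j)$ with $E_1, E_2$ independent $\Exp(\lambda)$ is immediate, with no ODE to solve. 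Your argument instead re-derives this by treating the left (and right) side length as a pure-jump Markov process, identifying its jump rate via the competing-clocks/thinning structure, and solving the backward Kolmogorov equation for the tail; this is self-contained (it essentially re-proves Fact~\ref{fac:mondrian-poisson} along the way) but requires spelling out the Markov and thinning arguments carefully, as you acknowledge. Note also that your independence claim across coordinates is not actually used: the tail bound is a union bound over marginals, and $\E[D_\lambda^2]=\sum_j \E[L_j^2]$ is linearity. Both routes give the per-coordinate domination by a $\Gamma(2,\lambda)$ random variable and the same tail~\eqref{eq:diameter-bound}. One small caution on constants: your sketch honestly records only an $O(d/\lambda^2)$ bound for $\E[D_\lambda(x)^2]$, and indeed the $\Gamma(2,\lambda)$ domination gives $\E[(E_1+E_2)^2] = 6/\lambda^2$ per coordinate, not the $4/\lambda^2$ stated in~\eqref{eq:diameter-square}; the paper's own computation appears to drop the cross term $2\E[E_1]\E[E_2]=2/\lambda^2$. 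This does not affect the rate nor anything downstream, but you should not expect your approach (or the paper's) to deliver the constant $4$ as written.
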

    The proof of Lemma~\ref{lem:diameter} is provided in the supplementary material (see Appendix~\ref{ap:diameter}). 
    The second important property needed to carry out the analysis is stated in Lemma~\ref{lem:number-splits} and helps to control the ``variance'' of Mondrian forests. 
    It consists in an upper bound of order $O (\lambda ^d)$ on the total number of splits performed by a Mondrian partition $M_\lambda \sim \MP (\lambda, [0, 1]^d) $. 
    This ensures that enough data points fall in each cell of the tree, so that the labels of the tree are well estimated. 
    The proof of Lemma~\ref{lem:number-splits} is to be found in the supplementary material (see Appendix~\ref{ap:number-splits}).


    \begin{lem}[Number of splits]
    \label{lem:number-splits}
    If $K_\lambda$ denotes the number of splits performed by a Mondrian tree partition $M_\lambda \sim \MP (\lambda, [0,1]^d)$, 
    we have 
    $\E (K_\lambda) \leq (e (\lambda + 1))^d$.
  \end{lem}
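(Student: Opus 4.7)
The plan is to reduce the problem to a clean one-dimensional recursion by exploiting a slicing property of the Mondrian process, then solve the recursion exactly by induction on $d$.

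The first step is a slicing identity: for any coordinate $j \in \{1,\dots,d\}$ and any $z \in [0,1]$, the partition of the hyperslice $\{x_j = z\} \cap [0,1]^d \cong [0,1]^{d-1}$ induced by the leaves $C$ of $M_\lambda$ with $z \in [a_j^C, b_j^C]$ is itself distributed as $\MP(\lambda, [0,1]^{d-1})$. Indeed, splits of a cell $C$ along dimension $j$ merely transfer the corresponding slice cell to a child with the same non-$j$ dimensions and do not refine it, whereas splits along some $k \ne j$ split the slice cell along $k$ at a uniform threshold in $[a_k^C, b_k^C]$; by memorylessness of the exponential clocks in Algorithm~\ref{alg:split-cell}, the slice cell is therefore split at total rate $\sum_{k \ne j}(b_k^C - a_k^C)$, which matches exactly the defining rate structure of a $(d-1)$-dimensional Mondrian.

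The second step uses this identity to derive a recursion. Set $N_d(\lambda) := \E[K_\lambda] + 1$. Since the leaf count jumps up by $1$ at total rate $\Theta_t = \sum_{C \in \leaves(M_t)} |C|$, Fubini yields $N_d(\lambda) = 1 + \int_0^\lambda \E[\Theta_t]\, dt$. Decomposing $|C| = \sum_{j=1}^d (b_j^C - a_j^C)$ and rewriting each of the $d$ resulting sums over leaves as an integral over the associated slice coordinate,
\[
\E[\Theta_t] = \sum_{j=1}^d \int_0^1 \E\bigl[\bigl|\{C \in \leaves(M_t) : z \in [a_j^C, b_j^C]\}\bigr|\bigr]\, dz = d \cdot N_{d-1}(t),
\]
where the last equality applies the slicing identity to a Mondrian of lifetime $t$. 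This gives the recursion $N_d(\lambda) = 1 + d\int_0^\lambda N_{d-1}(t)\, dt$ with $N_0(\lambda) = 1$; a straightforward induction on $d$ then yields $N_d(\lambda) = (1+\lambda)^d$, and therefore $\E[K_\lambda] = (1+\lambda)^d - 1 \leq (e(\lambda+1))^d$, in fact with a factor $e^d$ to spare.

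The main obstacle is the careful justification of the slicing identity: one needs to verify not only that the instantaneous splitting rates of each slice cell match those of a $(d-1)$-dimensional Mondrian, but also that the holding times between consecutive slice-splits are exponentially distributed with the correct parameters and that slice cells induced by distinct $d$-dimensional cells evolve independently. Both properties follow from the exponential-clock construction together with the independence of the clocks attached to different cells and to different dimensions within a cell; once this is in place, everything else is routine.
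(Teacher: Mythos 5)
Your proof is correct, and it takes a genuinely different route from the paper's, yielding a strictly sharper conclusion. The paper first establishes the crude bound $\E(K_\lambda^A) \le \exp(\lambda|A|)$ via a Yule-process domination (citing Proposition~3 of \cite{balog2016mondriankernel}), and then covers $[0,1]^d$ by a regular grid of $\lceil\lambda\rceil^d$ sub-boxes, applying the Yule bound in each box and summing, which gives $\E(K_\lambda) \le (\lambda+1)^d e^d$. Your argument instead slices $[0,1]^d$ by hyperplanes $\{x_j = z\}$ and integrates, producing the ODE recursion $N_d(\lambda) = 1 + d\int_0^\lambda N_{d-1}(t)\,dt$, whose solution gives the \emph{exact} identity $\E[K_\lambda] = (1+\lambda)^d - 1$; this is smaller than the paper's bound by precisely the factor $e^d$. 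Both approaches lean on the same structural fact — consistency of the Mondrian process under restriction to axis-aligned sub-boxes, which is Fact~\ref{fac:mondrian-restriction} in the paper — but the paper restricts to full-dimensional sub-boxes while you restrict to lower-dimensional slices. Note that your ``slicing identity'' is literally Fact~\ref{fac:mondrian-restriction} applied to a degenerate box $C$ with $a_j = b_j = z$ (the paper explicitly allows lower-dimensional boxes), so you could cite it directly rather than re-deriving it from the exponential-clock construction; re-deriving it is harmless but adds unnecessary length. Your rate-decomposition step $\E[\Theta_t] = \sum_j \int_0^1 \E\bigl[\,|\{C \in \leaves(M_t): z \in [a_j^C, b_j^C]\}|\,\bigr]\,dz = d\,N_{d-1}(t)$ is correct (the count inside the expectation is a.s.\ the number of cells in the restricted partition, since leaves are disjoint and boundary coincidences have measure zero in $z$), and the induction solving the recursion checks out. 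In short: correct, different, and stronger than what is asked.
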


  \begin{rem}
    It is worth noting that controlling the total number of splits ensures that the cell $A_{\lambda_n} (X)$ in which a new random $X \sim \mu$ ends up contains enough training points among $X_1, \dots, X_n$ (see~Lemma~\ref{lem:number-cells} in appendix~\ref{ap:consistency}). 
    This enables to get a distribution-free consistency result.
    Another approach consists in lower-bounding the volume
     $V_{\lambda_n} (x)$ of $A_{\lambda_n} (x)$ in probability for any $x\in [0, 1]^d$, which shows that the cell $A_{\lambda_n} (x)$ contains enough training points, but this would require the extra assumption that the density of $X$ is lower-bounded.
  \end{rem}

  Remarkably, owing to the nice restriction
  properties of the Mondrian process,
  Lemmas~\ref{lem:diameter} and~\ref{lem:number-splits}
  essentially provide matching upper and lower bounds on the complexity of the partition. Indeed, in order to partition the cube $[0, 1]^d$ in cells of diameter $O (1/\lambda)$, at least $\Theta (\lambda^d)$ cells are needed; Lemma~\ref{lem:number-splits} shows that the Mondrian partition in fact contains only $O(\lambda^d)$ cells.

%




  \section{Minimax rates over the class of Lipschitz functions}
  \label{sec:minimax}

  The estimates obtained in Lemmas~\ref{lem:diameter} and~\ref{lem:number-splits} are quite explicit and sharp in their dependency on $\lambda$, and allow to study the convergence rate of our algorithm.
  Indeed, it turns out that our modified Mondrian Forest, when properly tuned, can achieve the minimax rate in classification over the class of Lipschitz functions (see e.g.~Chapter I.3 in~\cite{nemirovski2000nonparametric} for details on minimax rates).
  We provide two results: a convergence rate for the estimation of the conditional probabilities, measured by the quadratic risk, see~Theorem~\ref{thm:minimax}, and a control on the distance between the classification error of our classifier and the Bayes error, see Theorem~\ref{thm:minimax-classification}. 
  We provide also similar minimax bounds for the regression setting instead of the classification one in the supplementary material, see Proposition~\ref{prop:minimax-regression} in Appendix~\ref{ap:minimax}. 

  Let $\wh \eta_n $ be the estimate of the conditional probability $\eta$ based on the Mondrian Forest (see Algorithm~\ref{alg:mondrian-forest}) in which: 
  \begin{itemize}
  	\item[$(i)$] Each leaf label is computed as the proportion of $1$ in the corresponding leaf;
  	\item[$(ii)$] Forest prediction results from the average of tree estimates instead of a majority vote. 
  \end{itemize}

    \begin{thm}
    \label{thm:minimax}
    Assume that the conditional probability function $\eta : [0,1]^d \to [0, 1]$ is Lipschitz on $[0, 1]^d$. 
    Let $\wh \eta_n$ be a Mondrian Forest as defined in Points~(i) and~(ii), with a lifetimes sequence that satisfies $\lambda_n \asymp n^{1/(d+2)}$.
    Then, the following upper bound holds
    \begin{equation}
      \label{eq:minimax-eta-rates}
      \E (\eta (X) - \wh \eta_n (X))^2 = O (n^{-2/(d+2)})
    \end{equation}
    for $n$ large enough, which correspond to the minimax rate over the set of Lipschitz functions.
  \end{thm}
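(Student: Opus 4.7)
The plan is to reduce to a single tree via convexity and to decompose its risk into a bias term (controlled by Lemma~\ref{lem:diameter}) and a variance term (controlled by Lemma~\ref{lem:number-splits}), then optimize in $\lambda_n$. Since $\wh \eta_n(X) = K^{-1}\sum_{k=1}^K \wh \eta_n^{(k)}(X)$ averages $K$ i.i.d.\ tree estimators and the squared loss is convex, Jensen's inequality gives
\begin{equation*}
    \E (\eta(X) - \wh \eta_n(X))^2 \leq \frac{1}{K}\sum_{k=1}^K \E (\eta(X) - \wh \eta_n^{(k)}(X))^2,
\end{equation*}
so it suffices to bound the risk of a single tree $\wh \eta_n^{\mathrm{tree}}(X) = N(X)^{-1}\sum_i Y_i \mathbf 1_{X_i \in A_{\lambda_n}(X)}$, where $A_{\lambda_n}(X)$ is the leaf containing $X$ and $N(X) = \sum_i \mathbf 1_{X_i \in A_{\lambda_n}(X)}$. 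Conditionally on $(Z, X, X_1, \dots, X_n)$ the labels $Y_i$ are independent Bernoulli with means $\eta(X_i)$, so on $\{N(X) > 0\}$ the standard bias--variance identity yields
\begin{equation*}
    \E\big[(\eta(X) - \wh \eta_n^{\mathrm{tree}}(X))^2 \mid Z, X, X_1, \dots, X_n\big] \leq (\eta(X) - \bar\eta(X))^2 + \frac{1}{4 N(X)},
\end{equation*}
with $\bar \eta(X) := N(X)^{-1}\sum_i \eta(X_i)\mathbf 1_{X_i \in A_{\lambda_n}(X)}$, and on $\{N(X) = 0\}$, using an arbitrary default value in $[0,1]$, the squared error is crudely bounded by $1$.

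For the bias, every $X_i \in A_{\lambda_n}(X)$ satisfies $\|X - X_i\|_2 \leq D_{\lambda_n}(X)$, hence the Lipschitz assumption gives $|\eta(X) - \bar\eta(X)| \leq L\, D_{\lambda_n}(X)$ and Lemma~\ref{lem:diameter} implies $\E[(\eta(X) - \bar\eta(X))^2] \leq 4dL^2/\lambda_n^2$. For the remaining two contributions I would condition on $Z$: the leaves $(A_j)_{j=1}^{K_n}$ are then deterministic, $N_j := \sum_i \mathbf 1_{X_i \in A_j}$ is $\mathrm{Binomial}(n, \mu(A_j))$ and independent of $X$. Using the elementary bounds $\E[\mathbf 1_{\{N \geq 1\}}/N] \leq 2\E[1/(1+N)] \leq 2/((n+1)p)$ for $N \sim \mathrm{Binomial}(n,p)$, and $p(1-p)^n \leq p\,e^{-np} \leq 1/(en)$, one obtains
\begin{equation*}
    \E\!\left[\frac{\mathbf 1_{\{N(X) > 0\}}}{N(X)}\;\Big|\; Z\right] \leq \frac{2 K_n}{n+1}, \qquad \P(N(X) = 0 \mid Z) \leq \frac{K_n}{e n}.
\end{equation*}
Since $K_n \leq K_{\lambda_n}+1$, Lemma~\ref{lem:number-splits} yields $\E[K_n] \leq (e(\lambda_n+1))^d+1$, so both contributions are $O(\lambda_n^d/n)$.

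Combining the three bounds gives $\E (\eta(X) - \wh \eta_n(X))^2 = O(\lambda_n^{-2} + \lambda_n^d/n)$, and the choice $\lambda_n \asymp n^{1/(d+2)}$ balances the two terms at the minimax rate $n^{-2/(d+2)}$. The main non-trivial step is the variance / empty-cell bound: one needs the Binomial estimate to be applied cell-by-cell and the expected number of leaves supplied by Lemma~\ref{lem:number-splits} to be of the \emph{optimal} order $\lambda_n^d$, matching the diameter decay $\lambda_n^{-1}$ of Lemma~\ref{lem:diameter}; once this matching is in place, the remainder of the argument is a standard partition-estimator calculation.
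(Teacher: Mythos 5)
Your proof is correct, and it takes a genuinely different route for the variance and empty-cell terms. The paper proves Theorem~\ref{thm:minimax} by specializing the regression result (Proposition~\ref{prop:minimax-regression}) with $\sigma^2 = 1/4$ and $\|\eta\|_\infty \leq 1$; that proof uses the same Jensen reduction and the same Lipschitz/Lemma~\ref{lem:diameter} bias bound as yours, but then decomposes the risk around the \emph{population} leaf average $\wt\eta_{\lambda_n}(x) = \E[\eta(X') \mid X' \in A_{\lambda_n}(x)]$ and imports, as a black box, Proposition~2 of~\cite{arlot2014purf_bias} to bound $\E(\wt\eta_{\lambda_n}(X) - \wh\eta_n(X))^2 \leq \tfrac{k+1}{n}(2\sigma^2 + 9\|\eta\|_\infty)$ for a $k$-split tree, then sums over the law of $K_{\lambda_n}$. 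You instead decompose around the \emph{empirical} leaf average $\bar\eta(X)$ of the $\eta(X_i)$, exploit the Bernoulli structure to get a conditional variance $\leq 1/(4N(X))$, and then control $\E[\mathbf 1_{\{N(X)>0\}}/N(X) \mid Z]$ and $\P(N(X)=0 \mid Z)$ directly via the elementary Binomial inequalities $\E[1/(1+N)] \leq 1/((n+1)p)$ and $p(1-p)^n \leq 1/(en)$, summed cell-by-cell. Your route is more self-contained (no external regression lemma) and exposes exactly what makes the rate work: matching the $O(\lambda_n^{-1})$ diameter of Lemma~\ref{lem:diameter} against the $O(\lambda_n^d)$ cell count of Lemma~\ref{lem:number-splits}. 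The paper's route is more general (it handles unbounded sub-Gaussian regression noise in one pass) and yields explicit constants inherited from the Arlot--Genuer variance bound. Both correctly land on $\E(\eta(X)-\wh\eta_n(X))^2 = O(\lambda_n^{-2} + \lambda_n^d/n)$ and the balancing choice $\lambda_n \asymp n^{1/(d+2)}$.
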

  To the best of our knowledge, Theorem~\ref{thm:minimax} is the first to exhibit the fact that a classification method based on a purely random forest can be minimax optimal in an arbitrary dimension.
  The same kind of result is stated for regression estimation in the supplementary material (see Proposition~\ref{prop:minimax-regression} in Appendix~\ref{ap:minimax}).

  Minimax rates, but only for $d=1$, were obtained in~\cite{genuer2012variance_purf,arlot2014purf_bias} for models of purely random forests such as Toy-PRF (where the individual partitions corresponded to randomly shifts of the
  regular partition of $[0, 1]$ in $k$ intervals) and PURF (Purely Uniformly Random Forests, where the partitions were obtained by drawing $k$ random thresholds at random in $[0, 1]$).

  However, for $d=1$, tree partitions reduce to partitions of $[0, 1]$ in intervals, and do not possess the recursive structure that appears in higher dimensions and makes their precise analysis difficult. 
  For this reason, the analysis of purely random forests for $d > 1$ has typically produced sub-optimal results: for example, \cite{biau2008consistency_rf} show consistency for UBPRF (Unbalanced Purely Random Forests, that perform a fixed number of splits and randomly choose a leaf to split at each step), but with no rate of convergence. 
  A further step was made by~\cite{arlot2014purf_bias}, who studied the BPRF (Balanced Purely Random Forests algorithm, where all leaves were split, so that the resulting tree was complete), and obtained suboptimal rates.
  In our approach, the convenient properties of the Mondrian process enable to bypass the inherent difficulties met in previous attempts, thanks to its recursive structure, and allow to obtain the minimax rate with transparent proof.

  Now, note that the Mondrian Forest classifier corresponds to the plugin classifier $\wh g_n (x) = \bm 1_{ \{ \wh \eta_n (x) > 1/2 \} }$, where $\wh \eta_n $ is defined in Points~(i) and~(ii).
  A general theorem (Theorem~6.5 in \cite{devroye1996ptpr}) allows us to derive upper bounds on the distance between the classification error of $\wh g_n$ and the Bayes error, thanks to
  Theorem~\ref{thm:minimax}.
  \begin{thm}
    \label{thm:minimax-classification}
    Under the same assumptions as in Theorem~\ref{thm:minimax},
    the Mondrian Forest classifier $\wh g_{n}$ with lifetime sequence $\lambda_n \asymp n^{1/(d+2)}$ satisfies
    \begin{equation}
      \label{eq:minimax-error-rates}
      L (\wh g_n) - L^* = o (n^{-1/(d+2)}).
    \end{equation}
  \end{thm}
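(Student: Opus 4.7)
The plan is to reduce the excess classification risk to the $L^2$-estimation error already controlled by Theorem~\ref{thm:minimax}, following the standard plug-in strategy. First, I would invoke Theorem~6.5 of~\cite{devroye1996ptpr}, which states that for the plug-in classifier $\wh g_n = \bm 1_{\{ \wh \eta_n > 1/2 \}}$ induced by any regression estimate $\wh\eta_n$,
\[
L(\wh g_n) - L^* \leq 2\,\E|\wh\eta_n(X) - \eta(X)|.
\]
Combining this with Jensen's inequality $\E|\wh\eta_n - \eta| \leq (\E(\wh\eta_n - \eta)^2)^{1/2}$ and the $L^2$-bound $\E(\wh\eta_n(X) - \eta(X))^2 = O(n^{-2/(d+2)})$ of Theorem~\ref{thm:minimax} already yields $L(\wh g_n) - L^* = O(n^{-1/(d+2)})$.

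To sharpen this $O$ into a little-$o$, I would exploit the more precise identity
\[
L(\wh g_n) - L^* = 2\,\E \Big[\, \big| \eta(X) - 1/2 \big|\, \bm 1_{\{ \wh g_n(X) \neq g^*(X) \}} \,\Big],
\]
together with the elementary observation that on the error event $\{\wh g_n(X) \neq g^*(X)\}$ the estimate $\wh\eta_n(X)$ necessarily sits on the opposite side of $1/2$ from $\eta(X)$, so that $|\wh\eta_n(X) - \eta(X)| \geq |\eta(X) - 1/2|$. Applying Cauchy--Schwarz then gives
\[
L(\wh g_n) - L^* \leq 2\,\sqrt{\E(\wh\eta_n(X) - \eta(X))^2}\;\sqrt{\P \big( \wh g_n(X) \neq g^*(X),\; \eta(X) \neq 1/2 \big)},
\]
where restricting to $\{\eta \neq 1/2\}$ is harmless, since $|\eta - 1/2|$ vanishes on the complementary set.

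The first factor is $O(n^{-1/(d+2)})$ by Theorem~\ref{thm:minimax}, so it remains to verify that the probability factor is $o(1)$. Since $L^2$-convergence implies convergence in probability, $\wh\eta_n(X) \to \eta(X)$ in probability; hence, outside the level set $\{\eta = 1/2\}$ where the Bayes classifier is unambiguous, $\bm 1_{\{\wh\eta_n(X) > 1/2\}}$ agrees with $g^*(X)$ with probability tending to $1$. This is made rigorous by a short $\varepsilon$-argument: split $\{\eta \neq 1/2\}$ into $\{|\eta - 1/2| > \varepsilon\}$, where the disagreement event is contained in $\{|\wh\eta_n - \eta| > \varepsilon\}$ and has $\P \to 0$, and $\{0 < |\eta - 1/2| \leq \varepsilon\}$, whose $\P$-measure vanishes as $\varepsilon \to 0$. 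Multiplying the two factors yields the announced $o(n^{-1/(d+2)})$.

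The main subtlety is the careful bookkeeping around the level set $\{\eta = 1/2\}$, on which the plug-in classifier may well disagree with $g^*$ without contributing to the excess risk; the identity with the weight $|\eta - 1/2|$ is precisely what absorbs this pathological case. Beyond this technical point, the argument is a direct combination of Theorem~\ref{thm:minimax} with the classical Devroye--Györfi--Lugosi plug-in reduction, so no new probabilistic estimate on the Mondrian partition is needed.
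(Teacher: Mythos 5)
Your proof is correct and takes essentially the same route as the paper, which obtains the theorem by applying Theorem~6.5 of Devroye--Györfi--Lugosi to the $L^2$-bound of Theorem~\ref{thm:minimax}. One small slip worth fixing: the inequality $L(\wh g_n) - L^* \leq 2\,\E|\wh\eta_n(X)-\eta(X)|$ that you attribute to Theorem~6.5 is their Theorem~2.2 (resp.\ Corollary~6.2); Theorem~6.5 is precisely the sharper statement that $L^2$-consistency of $\wh\eta_n$ gives $L(\wh g_n) - L^* = o\bigl(\sqrt{\E(\wh\eta_n(X)-\eta(X))^2}\bigr)$, and the chain you develop --- the identity $L(\wh g_n)-L^* = 2\,\E\bigl[|\eta(X)-\tfrac12|\,\bm 1_{\{\wh g_n(X)\neq g^*(X)\}}\bigr]$, the lower bound $|\wh\eta_n-\eta|\geq|\eta-\tfrac12|$ on the disagreement event, Cauchy--Schwarz, and the $\varepsilon$-split showing the residual probability is $o(1)$ --- is exactly the proof of that theorem, which the paper simply cites as a black box.
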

    The rate of convergence $o(n^{-1/(d+2)})$ for the error probability with a Lipschitz conditional probability $\eta$ turns out to be optimal, 
    as shown by~\cite{yang1999minimax}. Note that faster rates can be achieved in classification under low noise assumptions such as the \emph{margin assumption}~\cite{mammen1999margin} (see e.g.~\cite{tsybakov2004aggregation,audibert2007plugin,lecue2007lownoise}).
    Such specializations of our results are to be considered in a future work, the aim of the present paper being an emphasis on the appealing optimal properties of our modified Mondrian Forest.

    \section{Experiments}    
    \label{sec:experiments}    

    We now turn to the empirical evaluation of our algorithm, and examine its predictive performance (test error) as a function of the training size.
    More precisely, we compare the modified Mondrian Forest algorithm (Algorithm~\ref{alg:mondrian-forest}) 
    to batch (Breiman RF \cite{breiman2001randomforests}, Extra-Trees-1 \cite{geurts2006extremely}) and online (the Mondrian Forest algorithm \cite{lakshminarayanan2014mondrianforests} with fixed lifetime parameter $\lambda$) Random Forests algorithms.    
    We compare the prediction accuracy (on the test set) of the aforementioned algorithms trained on varying fractions of the training data from $10 \%$ to $100\%$.

    Regarding our choice of competitors, we note that Breiman's RF is well-established and known to achieve state-of-the-art performance.
    We also included the \emph{Extra-Trees-$1$} (ERT-$1$) algorithm \cite{geurts2006extremely}, which is most comparable to the Mondrian Forest classifier since it also draws splits randomly (we note that the ERT-$k$ algorithm \cite{geurts2006extremely} with the default tuning $k = \sqrt{d}$ in the \texttt{scikit-learn} implementation \cite{pedregosa2011scikit-learn} achieves scores very close to those of Breiman's RF).
    
    In the case of online Mondrian Forests, we included our modified Mondrian Forest classifier with an increasing lifetime parameter $\lambda_n = n^{1/(d+2)}$ tuned according to the theoretical analysis (see Theorem~\ref{thm:minimax-classification}), as well as a Mondrian Forest classifier with constant lifetime parameter $\lambda = 2$.
    Note that while a higher choice of $\lambda$ would have resulted in a performance closer to that of the modified version (with increasing $\lambda_n$), our inconsistency result (Proposition~\ref{prop:inconsistency}) shows that its error would eventually stagnate given more training samples. 
    In both cases, the splits are drawn within the range of the training feature, as in the original Mondrian Forest algorithm.
    Our results are reported in Figure~\ref{fig:experiments-datasets}.

    \begin{figure}[h]
      \centering      
      \hfill
      \includegraphics[width=.45\linewidth] {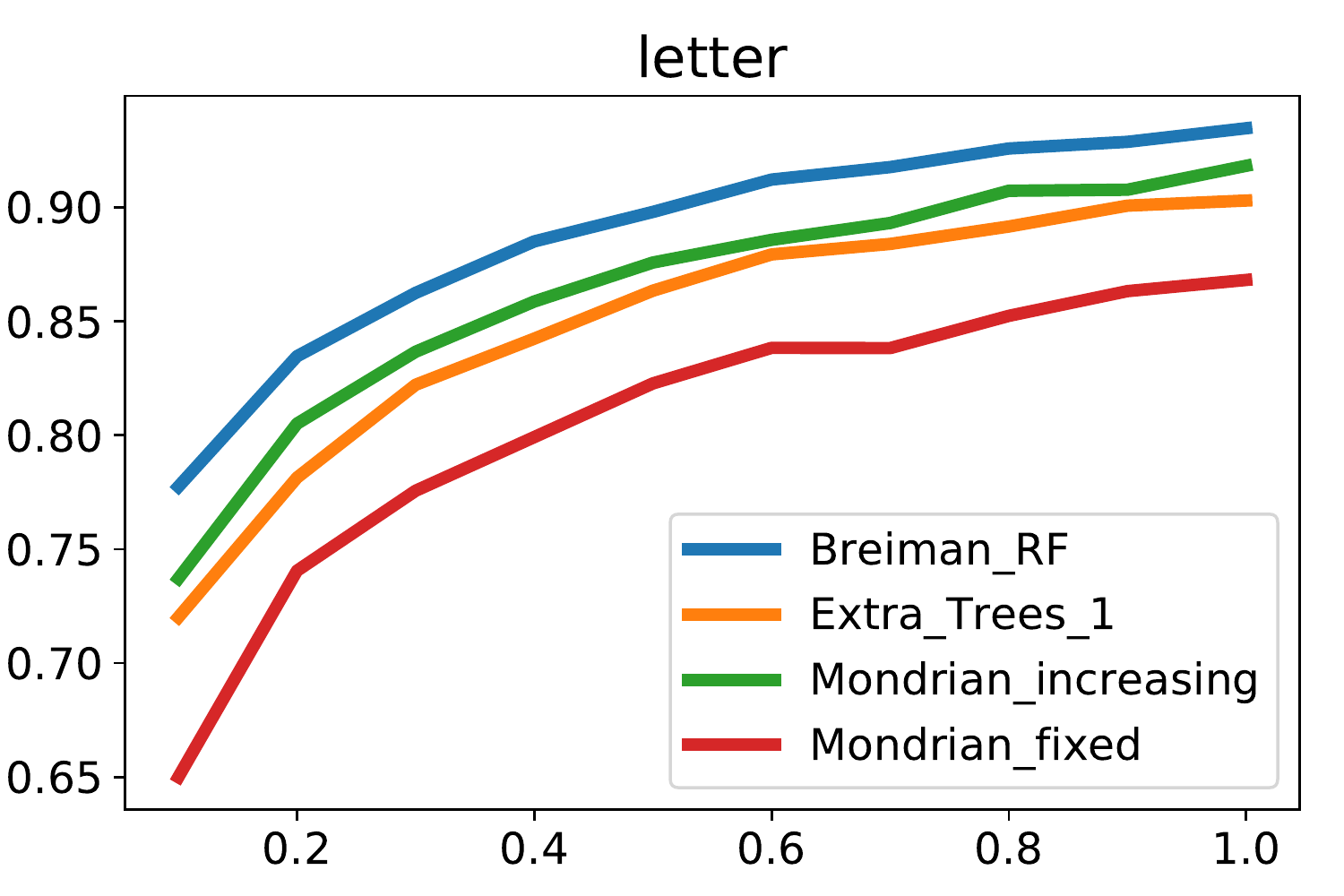}
      \hfill
      \includegraphics[width=.45\linewidth] {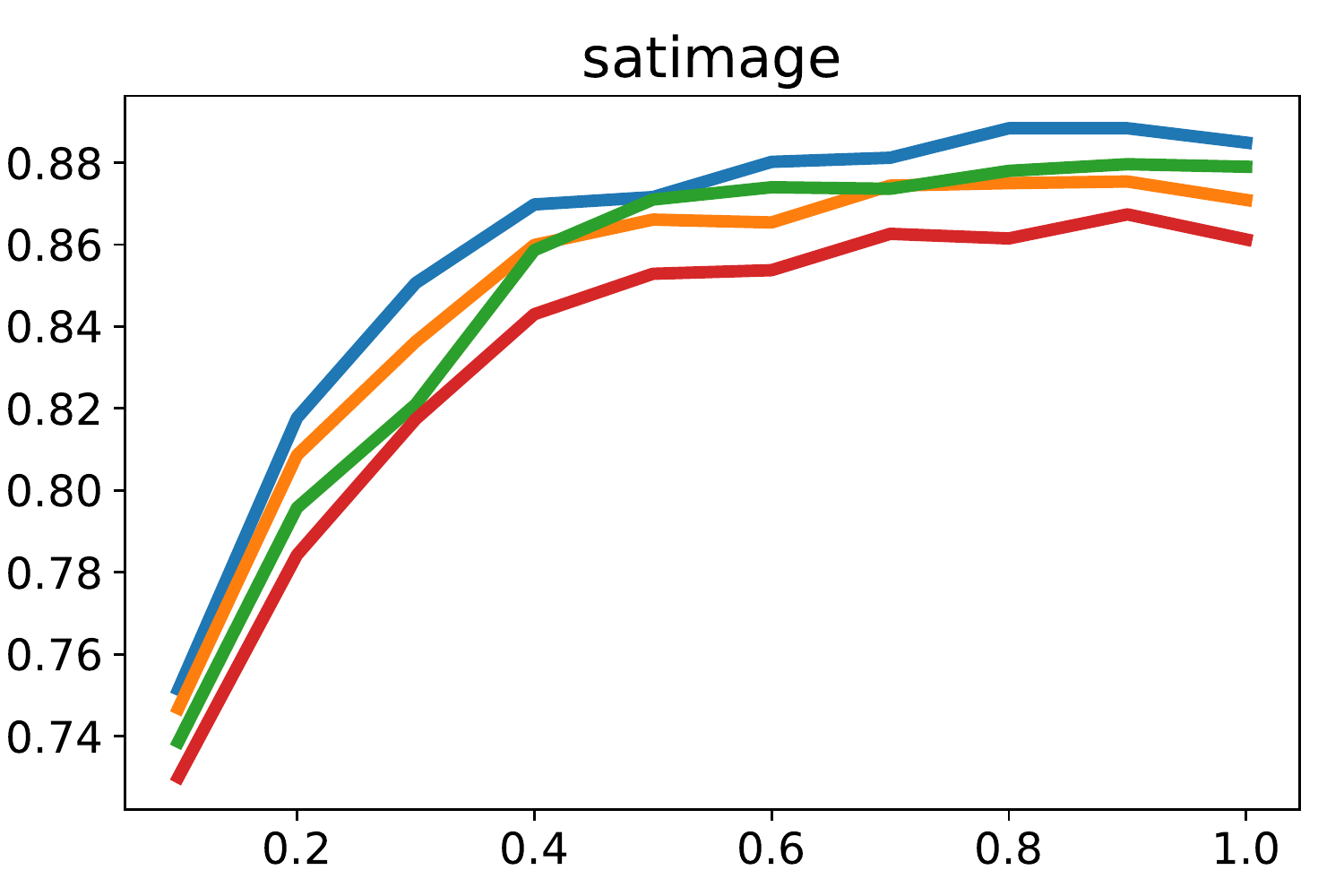}
      \hfill \textcolor{white}{.} \\
      \hfill
      \includegraphics[width=.45\linewidth] {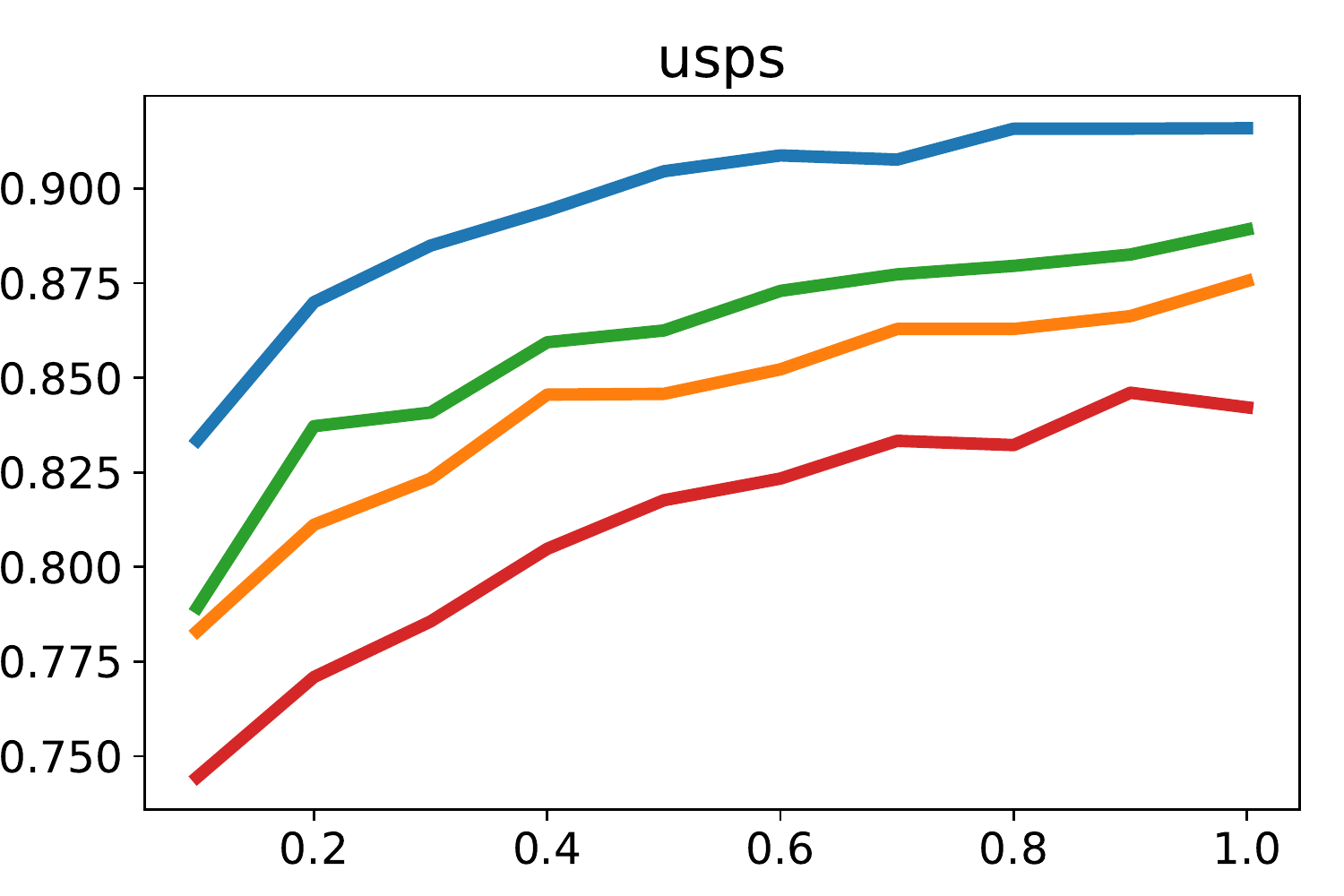}
      \hfill
      \includegraphics[width=.45\linewidth] {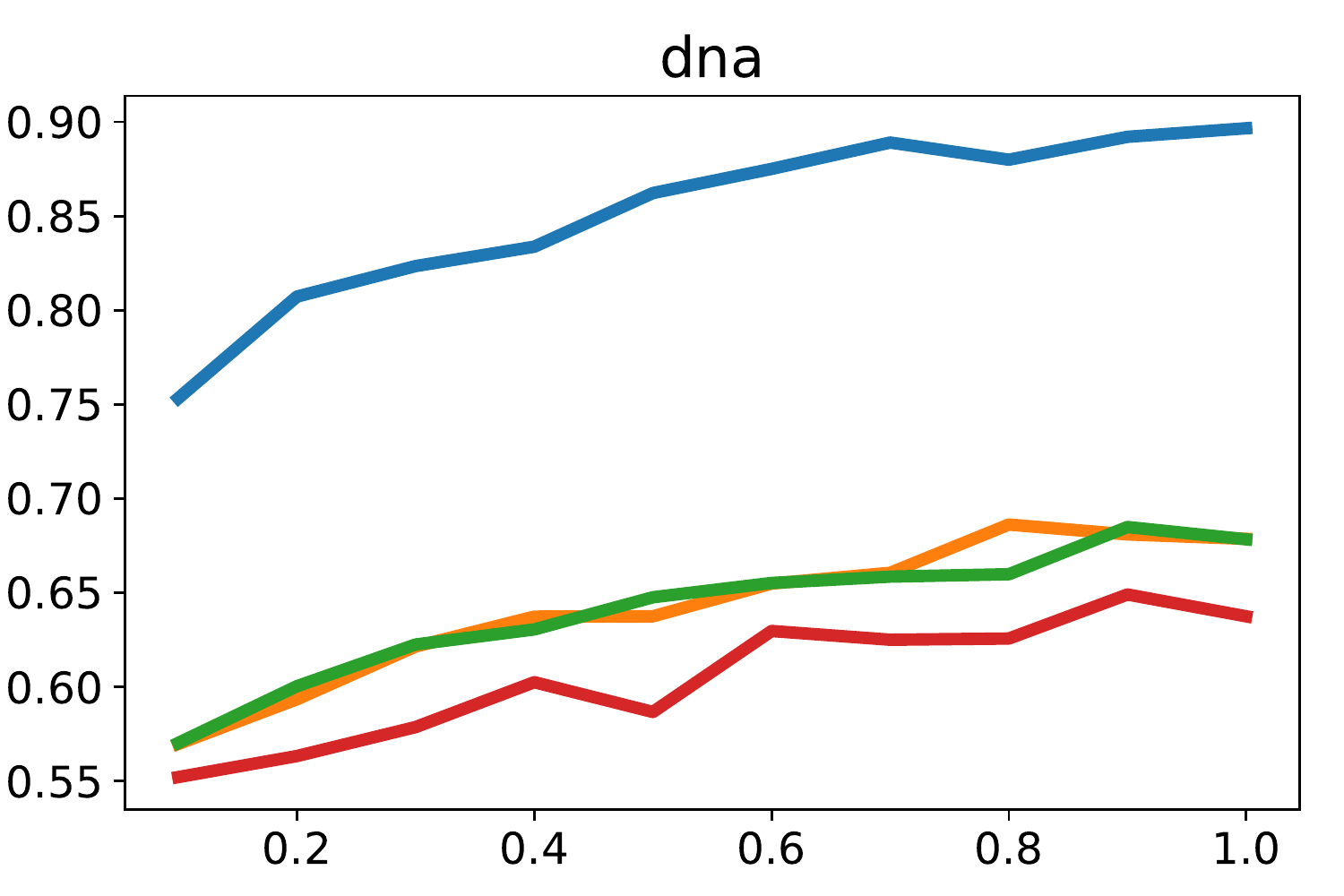}
      \hfill 
      \caption{Prediction accuracy as a function of the fraction of data used on several datasets. Modified MF (Algorithm~\ref{alg:mondrian-forest}) outperforms MF with a constant lifetime, and is better than the batch ERT-$1$ algorithm.
        It also performs almost as well as Breiman's RF (a batch algorithm that uses the whole training dataset in order to choose each split) on several datasets, while being incremental and much faster to train.
      On the \emph{dna} dataset, as noted in \cite{lakshminarayanan2014mondrianforests}, Breiman's RF outperforms the other algorithms because of the presence of a large number of irrelevant features.}
      \label{fig:experiments-datasets}
    \end{figure}

  \section{Conclusion and future work}
  \label{sec:conclusion}

  Despite their widespread use in practice, the theoretical understanding of Random Forests is still incomplete. 
  In this work, we show that amending the Mondrian Forest classifier, originally introduced to provide an efficient online algorithm, leads to an algorithm that is not only consistent, but in fact minimax optimal for Lipschitz conditional probabilities in arbitrary dimension. This new result suggests promising improvements in the understanding of random forests methods.

  A first, natural extension of our results, that will be addressed in a future work, is the study of the rates for smoother regression functions. 
  Indeed, we conjecture that through a more refined study of the local properties of the Mondrian partitions, it is possible to describe exactly the distribution of the cell of a given point. In the spirit of the work of~\cite{arlot2014purf_bias} in dimension one, this could be used to show improved rates for the bias of forests (e.g. for $C^2$ regression functions) compared to the tree bias, and hence give some theoretical insight to the empirically well-known fact that a forest performs better than individual trees.

  Second, the optimal upper bound $O(n^{-1/(d+2)})$ 
  obtained in this paper is very slow when the number of features $d$ is large.
  This comes from the well-known curse of dimensionality phenomenon, a problem affecting all fully nonparametric algorithms.
  A standard technique used in high-dimensional settings is to work under a sparsity assumption, where only $s \ll d$ features are informative (\ie affect the distribution of $Y$). In such settings, a natural strategy is to select the splits using the labels $Y_1, \dots, Y_n$, as most variants of Random Forests used in practice do. For example, it would be interesting to combine a Mondrian process-based randomization with a choice of the best split among several candidates, as performed by the Extra-Tree algorithm~\cite{geurts2006extremely}. Since the Mondrian Forest guarantees minimax rates, we conjecture that it should improve feature selection of  batch random forest methods, and improve the underlying randomization mechanism of these algorithms.
  From a theoretical perspective, it could be interesting to see how the minimax rates obtained here could be coupled with results on the ability of forests to select informative variables, see for instance~\cite{scornet2015consistency_rf}.

  








\begin{thebibliography}{10}

\bibitem{arlot2014purf_bias}
Sylvain Arlot and Robin Genuer.
\newblock Analysis of purely random forests bias.
\newblock {\em arXiv preprint {\tt arXiv:1407.3939}}, 2014.

\bibitem{audibert2007plugin}
Jean-Yves Audibert and Alexandre~B. Tsybakov.
\newblock Fast learning rates for plug-in classifiers.
\newblock {\em The Annals of Statistics}, 35(2):608--633, 2007.

\bibitem{balog2016mondriankernel}
Matej Balog, Balaji Lakshminarayanan, Zoubin Ghahramani, Daniel~M. Roy, and
  Yee~W. Teh.
\newblock The {M}ondrian kernel.
\newblock In {\em 32nd Conference on Uncertainty in Artificial Intelligence
  (UAI)}, 2016.

\bibitem{biau2012analysis_rf}
G{\'e}rard Biau.
\newblock Analysis of a random forests model.
\newblock {\em Journal of Machine Learning Research}, 13(1):1063--1095, 2012.

\bibitem{biau2008consistency_rf}
G{\'e}rard Biau, Luc Devroye, and G{\'a}bor Lugosi.
\newblock Consistency of random forests and other averaging classifiers.
\newblock {\em Journal of Machine Learning Research}, 9:2015--2033, 2008.

\bibitem{biau2016rf_tour}
G{\'e}rard Biau and Erwan Scornet.
\newblock A random forest guided tour.
\newblock {\em TEST}, 25(2):197--227, 2016.

\bibitem{Breiman2000sometheory}
Leo Breiman.
\newblock Some infinity theory for predictor ensembles.
\newblock Technical Report 577, Statistics departement, University of
  California Berkeley, 2000.

\bibitem{breiman2001randomforests}
Leo Breiman.
\newblock Random forests.
\newblock {\em Machine Learning}, 45(1):5--32, 2001.

\bibitem{breiman2004consistency}
Leo Breiman.
\newblock Consistency for a simple model of random forests.
\newblock Technical Report 670, Statistics departement, University of
  California Berkeley, 2004.

\bibitem{denil2013online}
Misha Denil, David Matheson, and Nando de~Freitas.
\newblock Consistency of online random forests.
\newblock In {\em Proceedings of the 30th Annual International Conference on
  Machine Learning ({ICML})}, pages 1256--1264, 2013.

\bibitem{denil2014narrowing}
Misha Denil, David Matheson, and Nando de~Freitas.
\newblock Narrowing the gap: Random forests in theory and in practice.
\newblock In {\em Proceedings of the 31st Annual International Conference on
  Machine Learning ({ICML})}, pages 665--673, 2014.

\bibitem{devroye1996ptpr}
Luc Devroye, L{\'a}szl{\'o} Gy{\"o}rfi, and G{\'a}bor Lugosi.
\newblock {\em A Probabilistic Theory of Pattern Recognition}, volume~31 of
  {\em Applications of Mathematics}.
\newblock Springer-Verlag, 1996.

\bibitem{domingos2000hoeffdingtree}
Pedro Domingos and Geoff Hulten.
\newblock Mining high-speed data streams.
\newblock In {\em Proceedings of the 6th SIGKDD International Conference on
  Knowledge Discovery and Data Mining}, pages 71--80, 2000.

\bibitem{freund1997specialists}
Yoav Freund, Robert~E. Schapire, Yoram Singer, and Manfred~K. Warmuth.
\newblock Using and combining predictors that specialize.
\newblock In {\em Proceedings of the 29th Annual ACM Symposium on Theory of
  Computing}, pages 334--343, 1997.

\bibitem{genuer2012variance_purf}
Robin Genuer.
\newblock Variance reduction in purely random forests.
\newblock {\em Journal of Nonparametric Statistics}, 24(3):543--562, 2012.

\bibitem{geurts2006extremely}
Pierre Geurts, Damien Ernst, and Louis Wehenkel.
\newblock Extremely randomized trees.
\newblock {\em Machine learning}, 63(1):3--42, 2006.

\bibitem{helmbold1997pruning}
David~P. Helmbold and Robert~E. Schapire.
\newblock Predicting nearly as well as the best pruning of a decision tree.
\newblock {\em Machine Learning}, 27(1):51--68, 1997.

\bibitem{lakshminarayanan2014mondrianforests}
Balaji Lakshminarayanan, Daniel~M. Roy, and Yee~W. Teh.
\newblock {M}ondrian forests: Efficient online random forests.
\newblock In {\em Advances in Neural Information Processing Systems 27}, pages
  3140--3148. Curran Associates, Inc., 2014.

\bibitem{lakshminarayanan2016mondrianuncertainty}
Balaji Lakshminarayanan, Daniel~M. Roy, and Yee~W. Teh.
\newblock {M}ondrian forests for large-scale regression when uncertainty
  matters.
\newblock In {\em Proceedings of the 19th International Workshop on Artificial
  Intelligence and Statistics ({AISTATS})}, 2016.

\bibitem{lecue2007lownoise}
Guillaume Lecu{\'e}.
\newblock Optimal rates of aggregation in classification under low noise
  assumption.
\newblock {\em Bernoulli}, 13(4):1000--1022, 2007.

\bibitem{mammen1999margin}
Enno Mammen and Alexandre~B. Tsybakov.
\newblock Smooth discrimination analysis.
\newblock {\em The Annals of Statistics}, 27(6):1808--1829, 1999.

\bibitem{nemirovski2000nonparametric}
Arkadi Nemirovski.
\newblock Topics in non-parametric statistics.
\newblock {\em Lectures on Probability Theory and Statistics: Ecole d'Ete de
  Probabilites de Saint-Flour XXVIII-1998}, 28:85--277, 2000.

\bibitem{orbanz2015exchangeable}
Peter Orbanz and Daniel~M. Roy.
\newblock Bayesian models of graphs, arrays and other exchangeable random
  structures.
\newblock {\em IEEE transactions on pattern analysis and machine intelligence},
  37(2):437--461, 2015.

\bibitem{pedregosa2011scikit-learn}
Fabian Pedregosa, Ga{\"e}l Varoquaux, Alexandre Gramfort, Vincent Michel,
  Bertrand Thirion, Olivier Grisel, Mathieu Blondel, Peter Prettenhofer, Ron
  Weiss, Vincent Dubourg, Jake Vanderplas, Alexandre Passos, David Cournapeau,
  Matthieu Brucher, Matthieu Perrot, and {\'E}douard Duchesnay.
\newblock Scikit-learn: Machine learning in {P}ython.
\newblock {\em Journal of Machine Learning Research}, 12:2825--2830, 2011.

\bibitem{roy2011phd}
Daniel~M. Roy.
\newblock {\em Computability, inference and modeling in probabilistic
  programming}.
\newblock PhD thesis, Massachusetts Institute of Technology, 2011.

\bibitem{roy2009mondrianprocess}
Daniel~M. Roy and Yee~W. Teh.
\newblock The {M}ondrian process.
\newblock In {\em Advances in Neural Information Processing Systems 21}, pages
  1377--1384. Curran Associates, Inc., 2009.

\bibitem{saffari2009online-rf}
Amir Saffari, Christian Leistner, Jacob Santner, Martin Godec, and Horst
  Bischof.
\newblock On-line random forests.
\newblock In {\em 3rd IEEE ICCV Workshop on On-line Computer Vision}, 2009.

\bibitem{scornet2015consistency_rf}
Erwan Scornet, G{\'e}rard Biau, and Jean-Philippe Vert.
\newblock Consistency of random forests.
\newblock {\em The Annals of Statistics}, 43(4):1716--1741, 2015.

\bibitem{taddy2011dynamictrees}
Matthew~A. Taddy, Robert~B. Gramacy, and Nicholas~G. Polson.
\newblock Dynamic trees for learning and design.
\newblock {\em Journal of the American Statistical Association},
  106(493):109--123, 2011.

\bibitem{tsybakov2004aggregation}
Alexandre~B. Tsybakov.
\newblock Optimal aggregation of classifiers in statistical learning.
\newblock {\em Annals of Statistics}, 32(1):135--166, 2004.

\bibitem{willems1995context-basic}
Frans M.~J. Willems, Yuri~M. Shtarkov, and Tjalling~J. Tjalkens.
\newblock The context-tree weighting method: Basic properties.
\newblock {\em IEEE Transactions on Information Theory}, 41(3):653--664, 1995.

\bibitem{yang1999minimax}
Yuhong Yang.
\newblock Minimax nonparametric classification. {I. R}ates of convergence.
\newblock {\em IEEE Transactions on Information Theory}, 45(7):2271--2284,
  1999.

\end{thebibliography}

\newpage  
\appendix

\begin{center}
  \Large Supplementary material for \textbf{Universal consistency and minimax rates for online Mondrian Forests} \\
  \medskip
  {\normalsize J. Mourtada, S. Gaïffas and E. Scornet}
\end{center}

\medskip

\section{
  Proof of Lemma~\ref{lem:diameter}: diameter of the cells
}
\label{ap:diameter}

We start by recalling some important properties of the Mondrian process%
, which are exposed in~\cite{roy2009mondrianprocess}.

\begin{fact}[Consistency, Mondrian slices]
  \label{fac:mondrian-restriction}
  Let $M_\lambda \sim \MP (\lambda, [0,1]^d)$ be a Mondrian partition, and $C = \prod_{j=1}^d [a_j, b_j] \subset [0,1]^d$, be an axis-aligned box (we authorize lower-dimensional boxes when $a_j = b_j$ for some dimensions $j$). Consider the \emph{restriction} $M_\lambda|_C$ of $M_\lambda$ on $C$, \ie the partition on $C$ induced by the partition $M_\lambda$ of $[0,1]^d$. Then $M_\lambda|_C \sim \MP (\lambda, C)$.
\end{fact}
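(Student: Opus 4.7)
The plan is to reformulate Algorithm~\ref{alg:sample-mondrian} as a continuous-time Markov jump process $(M_t)_{t \in [0, \lambda]}$ taking values in tree partitions of $C_0 = [0,1]^d$, and then show that the restricted process $(M_t|_C)$ is itself Markov with the transition mechanism of $\MP(\cdot, C)$; the conclusion will follow by the standard uniqueness of Markov jump processes applied at $t = \lambda$.

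First I would verify the following equivalent sampling description of $\MP(\lambda, C_0)$: at each time $t$, every cell $A \in M_t$ carries an independent alarm with law $\Exp(|A|)$; when the first alarm rings at some time $\tau \leq \lambda$ associated with cell $A = \prod_j [\alpha_j, \beta_j]$, the cell is split at dimension $j$ drawn with probability $(\beta_j - \alpha_j)/|A|$ and threshold uniform on $[\alpha_j, \beta_j]$, after which the two children receive fresh independent alarms. Equivalence with the depth-first recursion of Algorithm~\ref{alg:sample-mondrian} is a direct consequence of the memoryless property combined with the classical identity that if $E_i \sim \Exp(r_i)$ are independent then $\min_i E_i \sim \Exp(\sum_i r_i)$ with argmin distributed proportionally to $r_i$.

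Next I would analyze the jumps of $(M_t|_C)$. The cells of $M_t$ intersecting $C$ induce, via their intersections with $C$, an axis-aligned partition of $C$---namely $M_t|_C$. For a cell $A \in M_t$ with $A \cap C \neq \emptyset$, the probability that a Mondrian split of $A$ actually cuts $A \cap C$ is
\begin{equation*}
\sum_j \frac{\beta_j - \alpha_j}{|A|} \cdot \frac{|A \cap C|_j}{\beta_j - \alpha_j} = \frac{|A \cap C|}{|A|},
\end{equation*}
where $|A \cap C|_j$ denotes the $j$-th side length of the box $A \cap C$. Consequently, splits of $A$ affecting $C$ arrive at rate $|A| \cdot |A \cap C|/|A| = |A \cap C|$, so the total rate of jumps of the restricted process equals $\sum_{A \cap C \neq \emptyset} |A \cap C| = \sum_{B \in M_t|_C} |B|$---precisely the rate of $\MP(\cdot, C)$ in state $M_t|_C$. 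Conditionally on such a restricted jump coming from splitting $A$, Bayes' rule shows that the cell being cut is $A \cap C$ with the right proportional probability, the split dimension $j$ is selected with probability $|A \cap C|_j/|A \cap C|$, and the threshold is uniform on the $j$-th side of $A \cap C$, i.e.\ the split of $A \cap C$ inside $M_t|_C$ is Mondrian-distributed.

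Since $M_0|_C = \{C\}$ matches the initial state of $\MP(\cdot, C)$ and the two Markov jump processes share transition rates and jump laws, they are equal in distribution on $[0, \lambda]$, yielding $M_\lambda|_C \sim \MP(\lambda, C)$. The degenerate case $a_j = b_j$ is handled by observing that $|C|$ sums only over non-degenerate dimensions and that $M_\lambda|_C$ consists of lower-dimensional boxes of $C$ (splits in a degenerate direction contribute no rate and never cut $C$). The main obstacle is the bookkeeping required to rigorously justify the equivalence between the depth-first recursion of Algorithm~\ref{alg:sample-mondrian} and the simultaneous Markov jump process view; once this is in hand, the rate identity $\sum_{A} |A \cap C| = \sum_{B \in M_t|_C} |B|$ and the Bayes computation above form the short combinatorial core of the argument.
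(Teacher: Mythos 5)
The paper does not actually prove this statement: it is recalled as a known property of the Mondrian process, with the proof deferred entirely to the citation of Roy and Teh's original paper. Your proposal therefore supplies an argument where the paper supplies none, and the argument you give is essentially the standard self-consistency proof from that reference: recast the recursive sampler as a Markov jump process driven by competing exponential clocks, thin each cell's clock by the probability $|A\cap C|/|A|$ that its split actually cuts $C$, observe that the thinned rates and conditional split laws depend only on the induced partition of $C$ and coincide with the generator of $\MP(\cdot,C)$, and conclude by uniqueness of the jump process. The rate identity and the Bayes computation for the conditional split law are correct, and your treatment of degenerate directions of $C$ is right. Two points deserve more care than you give them. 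First, the claim that the time until the first $C$-cutting split in the lineage of a cell $B=A\cap C$ is $\Exp(|B|)$ is a thinning statement with \emph{changing} ambient rates (the lineage cell shrinks after each non-cutting split while $|A_i\cap C|=|B|$ stays fixed); it is true, but needs either the decomposition of each cell's clock into $d$ independent per-dimension clocks (so that the sub-clock of rate $|B|_j$ is preserved along the lineage by memorylessness) or a short Laplace-transform induction, together with non-explosion of the ambient process (which follows from the bound $\E(K_\lambda)<\infty$ of Lemma~\ref{lem:number-splits}). Second, the identity $\sum_{A\cap C\neq\varnothing}|A\cap C|=\sum_{B\in M_t|_C}|B|$ requires ruling out cells of $M_t$ that meet $C$ only in a boundary face of positive linear dimension; this happens only when a split threshold lands exactly on a face of $C$, an event of probability zero, but it should be said. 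With those two items filled in, the proof is complete and matches the argument the paper implicitly relies on.
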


\begin{fact}[Dimension $1$]
  \label{fac:mondrian-poisson}
  For $d = 1$, 
  the splits from a Mondrian process $M_\lambda \sim \MP (\lambda, [0,1])$ form a subset of $[0, 1]$, which is distributed as a Poisson point process of intensity $\lambda dx$.
\end{fact}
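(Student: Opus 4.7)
The plan is to realize the $1$-dimensional Mondrian process via a coupling with a planar Poisson point process, from which the Poisson structure of its splits becomes transparent. Let $\Pi$ be a Poisson point process on $[0,1] \times [0,\infty)$ with intensity equal to the Lebesgue measure $dx \otimes dt$. I will interpret each point $(x,t) \in \Pi$ as a ``proposed split'' at location $x$ and time $t$, and construct a partition-valued process $(P_t)_{t\geq 0}$ of $[0,1]$ by setting $P_0 = \{[0,1]\}$ and, at each time $t$ for which some $(x,t) \in \Pi$, splitting the unique cell $A \in P_{t^-}$ that contains $x$ into $A \cap [0,x]$ and $A \cap [x,1]$. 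Almost surely no point of $\Pi$ falls on a previously created boundary, so the construction is well defined, and by construction the set of splits of $P_\lambda$ is exactly the projection of $\Pi \cap ([0,1] \times [0,\lambda])$ onto $[0,1]$.

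The heart of the proof is to identify the law of $P_\lambda$ with that of $\MP(\lambda, [0,1])$. Suppose that at some time $t_0$ the current partition consists of cells $A_1,\dots,A_m$ of lengths $\ell_1,\dots,\ell_m$. By the restriction and independence properties of Poisson processes, the restrictions of $\Pi$ to the disjoint strips $A_i \times (t_0, \infty)$ are independent Poisson processes of rate $\ell_i$; hence the next point of $\Pi$ falling in $A_i$ arrives at time $t_0 + E_i$ with $E_i \sim \Exp(\ell_i)$, independently across $i$, and its first coordinate is uniform on $A_i$. After a split the construction continues independently in each of the two resulting subcells. This reproduces exactly the recursive continuous-time dynamics specified by Algorithms~\ref{alg:sample-mondrian} and~\ref{alg:split-cell}: each cell of length $\ell$ waits an independent $\Exp(\ell)$ time until being split at a uniformly chosen location. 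A short induction on the successive split times then yields $P_\lambda \stackrel{d}{=} \MP(\lambda, [0,1])$.

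Once this distributional identification is made, the conclusion is immediate: the splits of $M_\lambda$ have the same law as the projection of $\Pi \cap ([0,1] \times [0,\lambda])$ onto $[0,1]$, which, by the mapping theorem for Poisson processes, is a Poisson point process on $[0,1]$ of intensity $\lambda\, dx$. The one genuine subtlety, and the reason the result is special to dimension one, is that no proposed split is ever ``wasted'': because the cells always form a true partition of $[0,1]$, each $x \in [0,1]$ belongs to a unique cell at every time, so every point of $\Pi \cap ([0,1] \times [0,\lambda])$ contributes exactly one split. This is precisely the feature that breaks down for $d \geq 2$, where a Mondrian cut only refines one cell of the current partition and the resulting split set is no longer Poisson.
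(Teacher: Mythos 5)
Your proof is correct. Note that the paper itself does not prove this statement: it records it as a known property of the Mondrian process and refers to Roy and Teh's original paper, so there is no in-paper argument to compare against. Your coupling with a planar Poisson process $\Pi$ on $[0,1]\times[0,\infty)$ is the standard and, in my view, the cleanest way to establish the fact, and you execute it correctly: the restriction of $\Pi$ to a strip $A\times(t_0,\infty)$ over a current cell $A$ of length $\ell$ yields an $\Exp(\ell)$ waiting time and a uniform split location, independently across the disjoint cells, which is exactly the recursive dynamics of $\mathtt{SplitCell}$; hence $P_\lambda \stackrel{d}{=} \MP(\lambda,[0,1])$, and the split set is the first-coordinate projection of $\Pi\cap([0,1]\times[0,\lambda])$, a Poisson process of intensity $\lambda\,dx$ by the mapping theorem. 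You also correctly isolate the one point that makes this special to $d=1$ --- every proposed split lands in some cell of the current partition of $[0,1]$, so no point of $\Pi$ is wasted --- which is precisely why the analogous statement fails for $d\geq 2$ and why the paper instead accesses one-dimensional information in higher dimension through the restriction property (Fact~\ref{fac:mondrian-restriction}) combined with this fact.
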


We will now establish the technical lemma~\ref{lem:diameter}. 
In what follows, $x \in [0,1]^d$ is arbitrary, and we let
$A_\lambda (x)$ denote the (random) cell of a Mondrian partition $M_\lambda \sim \MP (\lambda, [0,1]^d)$ containing $x$.

\begin{proof}[Proof of Lemma~\ref{lem:diameter}]
   Let $A_\lambda (x) = \prod_{j=1}^d [L_\lambda^j(x), R_\lambda^j(x)]$ denote the (random) cell of a Mondrian partition $M_\lambda \sim \MP (\lambda, [0,1]^d)$ containing $x \in [0,1]^d$.
   By definition, the $\ell^\infty$-diameter $D_\lambda (x)$ of $A_\lambda (x)$ is $\max_{1\leq j \leq d} (R_\lambda^j (x) - L_\lambda^j (x))$. Since the random variables $R_\lambda^j (x) - L_\lambda^j (x)$, $1\leq j \leq d$, all have the same distribution (by symmetry of the definition of the Mondrian process with respect to the dimension), it suffices to consider $D_\lambda^1 (x) := R_\lambda^1 (x) - L_\lambda^1 (x)$.

   Consider the segment
   $I^1 (x) = [0,1] \times \{ (x_j)_{2\leq j \leq d} \} \simeq [0,1]$ (through the natural identification) containing $x = (x_j)_{1 \leq j \leq d}$, and denote $\Phi^1_\lambda (x) \subset [0,1]$ the restriction of $M_\lambda$ to $I^1 (x)$.
   Note that $R_\lambda^1 (x)$ (resp. $L_\lambda^1(x)$) is the lowest element of $\Phi_\lambda^1 (x)$ that is larger than $x_1$ (resp. the highest element of $\Phi_\lambda^1 (x)$ that is smaller than $x_1$), and is equal to $1$ (resp. $0$) if $\Phi_\lambda^1(x) \cap [x_1, 1]$ (resp. $\Phi_\lambda^1(x) \cap [0, x_1]$) is empty. By the facts~\ref{fac:mondrian-restriction} and~\ref{fac:mondrian-poisson}, $\Phi_\lambda (x)$ is a Poisson point process of intensity $\lambda $.

   Now, note that
   the characterization of $L_\lambda^1(x)$ and $R_\lambda^1(x)$ in terms of $\Phi_\lambda^1(x)$ (a Poisson process on $[0,1]$) implies the following: the distribution of $(L_\lambda^1(x) , R_\lambda^1(x))$ is the same as that of $(\wt L_\lambda^1(x) \vee 0 , \wt R_\lambda^1 (x) \wedge 1)$, where $\wt \Phi_\lambda^1 (x)$ is a Poisson process on $\R$ of intensity $\lambda$,  and $\wt L_\lambda^1 (x) = \sup (\wt \Phi_\lambda^1 (x) \cap (-\infty, x])$, $\wt R_\lambda^1 (x) = \inf (\wt \Phi_\lambda^1 (x) \cap [x, + \infty))$.
   By the properties of the Poisson point process, this implies that
   $(R_\lambda^1 (x) - x_1 , x_1 - L_\lambda^1(x)) \overset{d}{=} (E_1 \wedge (1-x_1) , E_2 \wedge x_1)$, where $E_1, E_2$ are independent exponential random variables with parameter $\lambda$. In particular, $D_\lambda^1 (x)
   = R_\lambda^1 (x) - x_1 + x_1 - L_\lambda^1(x)$ is stochastically upper bounded by $E_1 + E_2 \sim \Gamma (2, \lambda)$, 
   so that 
   we have for every $\delta >0$:
   \begin{equation}
     \label{eq:diameter-proof-1}
     \P (D_\lambda^1 (x) \geq \delta)
     \leq (1 + \lambda \delta ) e^{- \lambda \delta}     
   \end{equation}
   (with equality if $\delta \leq x_1 \wedge (1-x_1)$), 
   and $\E [D_\lambda^1(x)^2] \leq \E (E_1^2) + \E (E_2^2) = \frac{4}{\lambda^2}$.
      %
   Finally, the bound~\eqref{eq:diameter-bound} for the diameter $D_\lambda (x) = \sqrt{\sum_{j=1}^d D_\lambda^j(x)^2}$ follows from the observation that $\P (D_\lambda (x) \geq \delta) \leq \P (\exists j : D_\lambda^j (x) \geq \frac{\delta}{\sqrt{d}}) \leq d \, \P (D_\lambda^1 (x) \geq \frac{\delta}{\sqrt{d}}) $ and inequality~\eqref{eq:diameter-proof-1}; the bound~\eqref{eq:diameter-square} is obtained by noting that $\E [D_\lambda(x)^2] = d \,  \E [D_\lambda^1(x)^2] \leq \frac{4d}{\lambda^2}$. 
\end{proof}

\section{Proof of Lemma~\ref{lem:number-splits}: number of splits}
\label{ap:number-splits}

  \begin{proof}
    Let $A \subset \R^d$ be an arbitrary box, and let $K_\lambda^A$ denote the number of splits performed by $M_\lambda^A \sim \MP (\lambda, A)$. As shown in the proof of Proposition~3 in \cite{balog2016mondriankernel},
    since the time until a leaf $\leaf$ is split follows an exponential distribution of rate $| \cell_\leaf | \leq |A|$ (independently of the other leaves), the number of leaves $K_{t} + 1 \geq K_{t}$ 
    at time $t$ 
    is dominated by the number of individuals in a Yule process with rate $|A|$, which gives the first estimate
    \begin{equation}
      \label{eq:proof-number-splits-1}
      \E (K_\lambda^A) \leq \exp({\lambda |A|}) \, .
    \end{equation}
    This bound can be refined to the correct 
    order of magnitude in $\lambda$ in the following way. Consider the covering $\cv$ of $A$ by a regular grid of  $\lceil \lambda \rceil ^d $ boxes obtained by dividing each coordinate of $A$ in $\lceil \lambda \rceil$. Since each split of $A$ induces a split in at least one box $C \in \cv$ (\ie a split in the restriction $M_\lambda^C$ of $M_\lambda^A$ to $C$), and since $M_\lambda^C \sim \MP (\lambda, C)$ by Fact~\ref{fac:mondrian-restriction}, 
    \begin{equation}
      \label{eq:proof-number-splits-2}
      \E (K_\lambda^A)
      \leq \sum_{C \in \cv} \E (K_\lambda^C)
      \overset{(*)}{\leq} \lceil \lambda \rceil ^d \exp
      \left( \lambda \frac{|A|}{\lceil \lambda \rceil} \right)
      \leq (\lambda + 1)^d \exp ( |A|)
    \end{equation}
    where in the inequality~(*) we applied the bound~\eqref{eq:proof-number-splits-1} to every cell $C \in \cv$ (and the fact that $|C| = |A| / \lceil \lambda \rceil$). The bound of Lemma~\ref{lem:number-splits} follows by taking $A = [0, 1]^d$ in~\eqref{eq:proof-number-splits-2}.
  \end{proof}

\section{Proof of Proposition~\ref{prop:inconsistency}: original Mondrian Forests are inconsistent}
\label{ap:inconsistency}

In this appendix, we show that Mondrian Forests with fixed lifetime $\lambda$ are inconsistent, as stated in Proposition~\ref{prop:inconsistency}. We establish that this is true both for the variant based on the full domain $[0, 1]^d$, and for the original Mondrian Forests algorithm~\cite{lakshminarayanan2014mondrianforests} that restricts to the range of training data.

\subsection{
  Reduction to the full domain}
\label{ap:variable-range}

First, we begin by showing that, asymptotically, there is little difference between Mondrian trees constructed on the full domain and those restricted to the range of the training data. This is due to the fact that, as the sample size $n$ grows large, the training data will span the whole domain, as well as every cell contained in it.

\begin{lem}
\label{lem:range}
Assume
the distribution $\mu$ of $X$ satisfies: $\mu (A) \geq \alpha \vol (A)$ for every measurable $A \subset [0, 1]^d$, for some $\alpha \in (0, 1]$. Fix $\lambda >0$. For every $n \geq 1$, there exists a couple $(M_{\lambda}, M_\lambda^{\range (n)})$ such that $M_{\lambda} \sim \MP (\lambda, [0, 1]^d)$, $M_{\lambda}^{\range (n)}$ is a Mondrian partition with parameter $\lambda$ restricted to the range defined by the data points $X_1, \dots, X_n$, and $\P (M_{\lambda} = M_{\lambda}^{\range (n)}) \to 1$ as $n \to \infty$.
\end{lem}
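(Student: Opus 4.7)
The plan is to construct $M_\lambda$ and $M_\lambda^{\range(n)}$ on a common probability space, driven by a single Poisson process of candidate splits, in such a way that they coincide with probability tending to $1$.

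Let $\Pi$ be a Poisson point process on $\{1,\dots,d\}\times[0,1]\times[0,\lambda]$ with intensity given by counting measure $\otimes$ Lebesgue $\otimes$ Lebesgue, independent of the training sample $(X_i)_{i\leq n}$. I build $M_\lambda$ from $\Pi$ recursively: at a cell $A=\prod_j[a_j,b_j]$ born at time $\tau_A$, its split is the first atom $(j,\nu,\tau)\in\Pi$ with $\tau\geq\tau_A$ and $\nu\in[a_j,b_j]$, and $A$ is a leaf when no such atom has $\tau\leq\lambda$; standard Poisson thinning shows that this reproduces the distribution described in Algorithm~\ref{alg:split-cell}. I build $M_\lambda^{\range(n)}$ from the same $\Pi$, replacing $[a_j,b_j]$ by the range $[a_j^r,b_j^r]$ of the training points falling in $A$. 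Since $A^{\range(n)}\subseteq A$ at every node, the atoms triggering a split in the range version form a sub-process of those triggering a split in the full one, and a descent-in-the-tree induction then shows that the two partitions coincide exactly on the event
\begin{equation*}
  \Omega_n \;=\; \bigl\{\, \forall\ \text{internal cell}\ A\ \text{of}\ M_\lambda, \ \nu_A \in [a_{j_A}^r, b_{j_A}^r] \,\bigr\}.
\end{equation*}

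It remains to show $\P(\Omega_n^c)\to 0$. Conditioning on $M_\lambda$, Lemma~\ref{lem:number-splits} ensures that it has a.s.\ finitely many cells (each of positive volume, since splits have atomless law). For a fixed internal cell $A$ containing $N_A$ training points, $\nu_A$ is uniform on $[a_{j_A},b_{j_A}]$ independently of the training data, so a Fubini argument using $\mu\geq\alpha\,\mathrm{Leb}$ together with $(1-Kt)^m\leq e^{-Kmt}$ yields a bound of the form
\begin{equation*}
  \P\bigl(\nu_A\notin[a_{j_A}^r, b_{j_A}^r]\bigm|M_\lambda, N_A\bigr) \;\leq\; \frac{2}{\alpha\,\vol(A)\,N_A}.
\end{equation*}
Since $\mu(A)\geq\alpha\,\vol(A)>0$, the law of large numbers gives $N_A\to\infty$ in probability, so the right-hand side tends to $0$. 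A union bound over the (a.s.\ finite) family of cells of $M_\lambda$ followed by dominated convergence in $M_\lambda$ concludes. A harmless truncation to $\{K_\lambda\leq K\}\cap\{\min_A\vol(A)\geq\eps\}$ and then letting $K\to\infty$, $\eps\to 0$ can be used to make this last step quantitative.

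The main obstacle is the intertwined randomness of the tree structure and the training-data ranges: the cells, their volumes and their training-point counts are all random and coupled through the same sample. The clean resolution rests on three ingredients: the Poisson-process coupling, which reduces partition equality to a purely geometric event on the splits of $M_\lambda$; the a.s.\ finiteness of the Mondrian tree from Lemma~\ref{lem:number-splits}, which legitimates the union bound over cells; and the density lower bound $\mu\geq\alpha\,\mathrm{Leb}$, which ensures that every cell, however small, eventually contains enough training points for its empirical range to fill it with high probability.
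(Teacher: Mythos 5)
Your overall strategy---couple $M_\lambda$ and $M_\lambda^{\range(n)}$ on a common probability space so that they coincide on an event $\Omega_n$ with $\P(\Omega_n)\to 1$---is the same as the paper's, and your identification of $\Omega_n$ (all thresholds of $M_\lambda$ land in the corresponding ranges) matches the paper's. The gap is in the coupling itself. A single Poisson process $\Pi$ on $\{1,\dots,d\}\times[0,1]\times[0,\lambda]$ does \emph{not} reproduce $\MP(\lambda,[0,1]^d)$ when $d\geq 2$. After the root $[0,1]^d$ splits along dimension $j^*$ at threshold $\nu^*$, the two children must evolve \emph{independently}, but in your construction both children scan for atoms $(j,\nu,\tau)$ with $j\neq j^*$ and $\nu\in[0,1]$: these are \emph{the same atoms}, so the siblings are strongly correlated (indeed, the very first post-split atom with $j\neq j^*$ would split both children at the same time and location). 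The one-dimensional case is special because the two children then partition the $\nu$-axis, but for $d\geq 2$ the ``standard Poisson thinning'' claim fails and your $M_\lambda$ is not Mondrian-distributed, which is exactly what the lemma requires. The fix is to attach an \emph{independent} Poisson process $\Pi_\eta$ to every node $\eta$ of the infinite binary tree and drive both partitions at node $\eta$ by $\Pi_\eta$; on $\Omega_n$ the node structures agree, so this does couple the two trees with identical split times, which is genuinely cleaner than the paper's coupling (which rescales the exponential clocks by $|\cell_\eta|/|\cell_\eta^{\range(n)}|$ and then must separately argue, via an inflation factor $\Delta_n\to 0$, that the resulting pruning discards nothing).

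Your bound on $\P(\Omega_n^c)$ also has a conditioning flaw and an incomplete final step. You write $\P(\nu_A\notin[a_{j_A}^r,b_{j_A}^r]\mid M_\lambda, N_A)$ while treating $\nu_A$ as uniform, but conditionally on $M_\lambda$ the threshold $\nu_A$ is deterministic, and for $\nu_A$ near an endpoint of $[a_{j_A},b_{j_A}]$ this conditional probability can be near $1$; your bound $2/(\alpha\vol(A)N_A)$ is only its average over $\nu_A$, which requires conditioning on the tree skeleton \emph{without} $\nu_A$---awkward, since descendants of $A$ depend on $\nu_A$. Moreover the resulting sum involves $1/\vol(A)$ (and then $1/\vol(A)^2$ after bounding $\E[1/N_A]$), which is not obviously integrable, hence the truncation you only sketch. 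The paper avoids both issues by noting that, almost surely, $\Omega_n^c$ is \emph{exactly} the event that some leaf $\cell_\leaf$ of $M_\lambda$ contains no training point, giving the clean bound $\P(\Omega_n^c\mid M_\lambda)\leq\sum_{\leaf}(1-\alpha\vol(\cell_\leaf))^n\to 0$ a.s., followed by dominated convergence, with no truncation needed. Combining the paper's empty-leaf characterization with your (corrected, per-node) Poisson coupling would in fact yield a shorter proof than the one in the paper.
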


\begin{proof}[Proof of Lemma~\ref{lem:range}]
  Let $M_{\lambda} \sim \MP (\lambda, [0, 1]^d)$ be sampled by the procedure~$\mathtt{SampleMondrian}$ (Algorithm~\ref{alg:sample-mondrian}). We will define explicitly each $M_{\lambda}^{\range (n)}$ so that they have the desired distribution, and agree with $M_{\lambda}$ on an event of high probability.

  First, consider the event $\Omega_n$
that all splits of $M_{\lambda}$ occur inside the range 
defined by the feature points among $X_1 ,  \dots, X_n$ that belong to the cell to be split. We will show that $\P (\Omega_n) \to 1$ as $n \to \infty$.
Since the tree $M_{\lambda}$ is grown independently of $(X_1, \dots, X_n)$, we may reason conditionally on $M_{\lambda}$, and $(X_1, \dots, X_n)$ remains distributed as $\mu^{\otimes n}$.
Note that $\Omega_n$ is equivalent to the following: no leaf cell of $M_{\lambda}$ contain no points among $X_1, \dots, X_n$. We can now write, denoting $\Omega_n^c$ the complementary of $\Omega_n$,
\begin{align}
  \P (\Omega^c_n \cond M_{\lambda})
  &= \P (\exists \leaf \in \leaves (M_{\lambda}) : \cell_\leaf \cap \{ X_1, \dots, X_n \} = \varnothing ) \nonumber  \\
  &\leq \sum_{\leaf \in \leaves (M_{\lambda})} \P ( \cell_\leaf \cap \{ X_1, \dots, X_n \} = \varnothing ) \nonumber \\
  &= \sum_{\leaf \in \leaves (M_{\lambda})} (1 - \mu (\cell_\leaf))^n \nonumber \\
  &\leq \sum_{\leaf \in \leaves (M_{\lambda})} (1 - \alpha \vol (\cell_\leaf))^n
    \label{eq:proof-range-hypothesis} \\
  &\mathop{\rightarrow}_{n \to \infty} 0 \quad \mbox{ a.s.}
    \label{eq:proof-range-limit}
\end{align}
where equation~\eqref{eq:proof-range-hypothesis} used the hypothesis $\mu \geq \alpha \vol$, and the convergence~\eqref{eq:proof-range-limit} is almost sure with respect to $M_\lambda$, since a.s. $\vol (A_{\leaf}) >0$ for every $\leaf \in \leaves (M_{\lambda})$. By the dominated convergence theorem (since each random variable $  \P (\Omega^c_n \cond M_{\lambda})$, $n \geq 1$, is dominated by $1$), we have $\P (\Omega_n) = \E [   \P (\Omega^c_n \cond M_{\lambda}) ] \to 0$ as $n \to \infty$.

For every $n \geq 1$, we define $M_\lambda^{\range(n)}$ as follows: on $\Omega_n^c$, we let $M_{\lambda}^{\range(n)}$ be a random Mondrian partition of lifetime $\lambda$, on the range defined by the data points $X_1, \dots, X_n$. On $\Omega_n$, we take $M_\lambda^{\range(n)}$ to be a pruning of $M_\lambda$. Specifically, for $\node \in \nodes (M_{\lambda})$, denote $E_{\node} = E_{\cell_\node} \sim \Exp (|\cell_{\node}|)$ the exponential random variables drawn during the construction of $M_{\lambda}$ (see Algorithm~\ref{alg:sample-mondrian}). Now, set $E_{\eta}^{\range (n)} := \frac{|\cell_{\node}|}{|\cell_{\node}^{\range (n)}|} E_{\eta} \sim \Exp ({|\cell_{\node}^{\range (n)}|})$, and $\tau_{\eta}^{\range (n)} := \sum_{\eta'} E_{\eta'}^{\range (n)}$, where the sum spans over the (strict) ancestors $\eta' \in \nodes (M_{\lambda})$ of $\eta$. Finally, we define $M_\lambda^{\range(n)}$ on $\Omega_n$ to be equal to the pruning of $M_{\lambda}$ obtained by keeping only the nodes $\nodes$ such that $\tau_{\eta}^{\range (n)} \leq \lambda$. By construction, $M_{\eta}^{\range (n)}$ has the distribution of a Mondrian process of parameter $\lambda$ restricted to the range of the data $X_1, \dots, X_n$.

It remains to show that $\P (M_\lambda^{\range(n)} = M_{\lambda}) \to 1$. Since we already proved that $\P (\Omega_n) \to 1$, it suffices to show that $\P (M_\lambda^{\range(n)} = M_{\lambda} \cond \Omega_n) \to 1$.


Second, 
consider the 
random variable
$\Delta_n = \sup_{\leaf \in \leaves (M_{\lambda})} \frac{|\cell_{\leaf}|}{|\cell_{\leaf}^{\range (n)}|} -1 \geq 0$. By the same argument as above, but replacing the boxes $\cell_\leaf$ ($\leaf \in \leaves (M_{\lambda})$) by interior cubes of size $\varepsilon$ around the edges of the cells $\cell_\node$ ($\node \in \nodes (M_{\lambda})$), we see that $\Delta_n \to 0$ in probability as $n \to \infty$. Since a.s. $\tau_{\leaf} < \lambda$ and $\tau_{\leaf}^{\range(n)} \leq (1 + \Delta_n) \tau_{\leaf}$ for every $\leaf \in \leaves (M_{\lambda})$, we have $\P (M_\lambda^{\range (n)} = M_{\lambda} \cond \Omega_n) \to 1$, which concludes the proof.
\end{proof}

\subsection{A simple example for fixed lifetime and range}
\label{ap:inconsistency-fixed}

In order to establish Proposition~\ref{prop:inconsistency}, it remains to provide a simple counter-example that proves the inconsistency of the Mondrian Forest algorithm for a fixed range and lifetime.

\begin{proof}
  Fix $\lambda >0$, and let $\epsilon \in (0, \frac{1}{4})$ to be specified later. 
  Let $X$ be uniformly distributed on $[0, 1]$; we set $Y = 1$ if $| X- \frac 12| \leq \epsilon $, and $0$ otherwise. Clearly, we have $L^* =0$.
  
   Denote $\wh g_{\lambda, n}^{(K)}$ the classifier described in Algorithm~\ref{alg:mondrian-forest} with $\lambda_n = \lambda$, trained on the dataset $( (X_1, Y_1), \dots, (X_n, Y_n) )$, and denote $\wh \eta_{\lambda, n}^{(K)}$ the corresponding estimate of the conditional probability $\eta$.
   Also, let $M_\lambda \sim \MP(\lambda, [0, 1]^d)$ and denote $A_\lambda (x) \subset [0, 1]$ the cell of $x \in [0, 1]$, as well as
   \[ 
     N_{\lambda, n} (x) := \sum_{i=1}^n \bm 1_{\{ X_i \in A_{\lambda} (x) \}} \ ,
     \qquad
     \wh \eta_{\lambda, n} (x) := \frac{1}{N_{\lambda, n} (x)} \sum_{i=1}^n Y_i \cdot \bm 1_{ \{ X_i \in A_\lambda (x) \} }
   \]
   (with $\wh \eta_{\lambda, n} (x) = 0$ if $N_{\lambda, n} (x) = 0$) and $\wh g_{\lambda,n} (x) := \bm 1_{ \{ \wh \eta_n (x) \geq \frac{1}{2} \} }$.
   For each $x \in [\frac{1}{2} - \epsilon, \frac{1}{2} + \epsilon]$, we have
   \begin{align*}
     \P ( \wh g_{\lambda, n}^{(K)} (x) = 1)
     = \P (\wh \eta_{\lambda, n}^{(K)} (x) \geq 1/2)
     \leq 2 \, \E [ \wh \eta_{\lambda, n}^{(K)} (x) ]
     = 2 \, \E [ \wh \eta_{\lambda, n} (x) ]     
   \end{align*}
   by Markov's inequality and the fact the $K$ trees in the forest have the same distribution as $M_\lambda$.
   Now, conditionally on $A_\lambda (x)$ and on $N_{\lambda, n}(x) = N \geq 1$, the points among $X_1, \dots, X_n$ that fall in $A_\lambda (x)$ are $N$ i.i.d. points drawn uniformly in the interval $A_\lambda (x)$, and $\wh \eta_{\lambda, n} (x)$ is just the fraction of those points that satisfy $|X_i - \frac{1}{2} | \leq \epsilon$.
   In particular,
   \begin{equation*}
     \E [\wh \eta_{\lambda, n} (x) \cond A_\lambda (x), N_{\lambda, n} (x) = N]
     = \frac{|A_\lambda (x) \cap [1/2 - \epsilon, 1/2 + \epsilon] |}{|A_\lambda (x) |}
     \leq \frac{ 2 \epsilon}{|A_\lambda (x) |}
   \end{equation*}
   so that
   \begin{equation}
     \label{eq:inconsistent-1}
     \P ( \wh g_{\lambda, n}^{(K)} (x) = 1)
     \leq 2  \epsilon \, \E [ |A_\lambda (x) |^{-1} ] \, .
   \end{equation}
   Now, recall that $M_\lambda$ is a partition of $[0, 1]$ into subintervals whose endpoints form a Poisson point process of intensity $\lambda$ (Fact~\ref{fac:mondrian-poisson}).
   In particular, a direct derivation shows that
   $\E [ |A_\lambda (x) |^{-1} ] \leq F(\lambda) := {\lambda} + 4e^{-\lambda/4} < + \infty$.
   Choosing $\epsilon := \frac{1}{4} \wedge \frac{1}{4 F(\lambda)}$ and using Equation~\eqref{eq:inconsistent-1}, we get
   $\P ( \wh g_{\lambda, n}^{(K)} (x) = 1) \leq \frac{1}{2}$.
   Finally, integrating over $X$, we get for each $n \geq 1$:
   \begin{equation}
     \label{eq:unconsistent-2}
     L ( g_n^{(K)})
     \geq \int_{1/2 - \epsilon}^{1/2 + \epsilon} \P ( \wh g_{\lambda, n}^{(K)} (x) = 0) dx
     \geq \epsilon > 0 \, ,
   \end{equation}
   so that $L ( g_n^{(K)})$ is bounded away from $0$, as announced.
%
\end{proof}

\section{Proof of Theorem~\ref{thm:consistency-mondrian}: consistency for Mondrian forests}
\label{ap:consistency}

\subsection{Some general consistency results}
\label{ap:general-consistency}

Let us recall two general consistency results that will be used in the proof.
First, the consistency of Mondrian forests can be deduced from that of the individual trees, using Proposition~\ref{prop:consistency-average}.

  \begin{prop}[Proposition~1 in \cite{biau2008consistency_rf}]
    \label{prop:consistency-average}
    If a sequence $(\wh g_n)_{n \geq 1}$ of randomized classifiers is consistent,  
    then for each $K \geq 1$, the averaged classifier $\wh g_n^{(K)}$ is consistent.
  \end{prop}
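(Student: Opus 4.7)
The plan is to reduce to a pointwise (in $\D_n$) bound comparing the error of the $K$-tree majority vote $\wh g_n^{(K)}$ to that of the base randomized classifier $\wh g_n$, and then to transfer convergence in probability.

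First I will recall the classical identity, valid also for randomized classifiers:
$$L(\wh g_n) - L^* = \E\bigl[|2\eta(X) - 1|\, \bm 1_{\{\wh g_n(X, Z) \neq g^*(X)\}} \mid \D_n \bigr],$$
where the expectation is over $(X, Y, Z)$ given $\D_n$. This follows from writing $L(\wh g_n) - L^* = \E[(1 - 2\eta(X))(\wh g_n(X, Z) - g^*(X)) \mid \D_n]$ and noting that the integrand equals $|2\eta(X) - 1|$ on the disagreement event and vanishes otherwise. The same identity applies verbatim to $\wh g_n^{(K)}$ with $Z$ replaced by $Z^{(K)}$.

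The key step will be a pointwise control on the probability that the majority vote disagrees with the Bayes rule. Fix $\D_n$ and $x \in [0, 1]^d$, and set $q_n(x) := \P_Z(\wh g_n(x, Z) \neq g^*(x) \mid \D_n)$. Conditionally on $\D_n$, the variables $\bm 1_{\{\wh g_n(x, Z_k) \neq g^*(x)\}}$, $k = 1, \dots, K$, are i.i.d.\ Bernoulli of parameter $q_n(x)$; if $\wh g_n^{(K)}(x, Z^{(K)}) \neq g^*(x)$, then their sum $S_K$ must be at least $K/2$. Markov's inequality then yields $\P_{Z^{(K)}}(S_K \geq K/2) \leq \E[S_K]/(K/2) = 2 q_n(x)$.

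Integrating this pointwise bound against $|2\eta(X) - 1|$ and applying the identity above to both $\wh g_n$ and $\wh g_n^{(K)}$ gives
$$L\bigl(\wh g_n^{(K)}\bigr) - L^* \leq 2 \bigl( L(\wh g_n) - L^* \bigr)$$
almost surely. Since by hypothesis $L(\wh g_n) \to L^*$ in probability, the same holds for $L(\wh g_n^{(K)})$. The argument is essentially routine once the Markov bound above is identified; the only point demanding care is to keep the three layers of randomness ($\D_n$, $(X, Y)$, and the independent votes $Z_1, \dots, Z_K$) properly separated when interchanging expectations.
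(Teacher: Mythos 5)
Your proof is correct and follows essentially the same route as the original argument of Biau, Devroye, and Lugosi (2008), which the paper cites without reproducing: write the excess risk of the vote via the identity $L(\wh g_n^{(K)}) - L^* = \E\bigl[|2\eta(X)-1|\,\bm 1_{\{\wh g_n^{(K)}(X,Z^{(K)}) \neq g^*(X)\}}\bigr]$, observe that a majority-vote error forces at least $K/2$ of the base classifiers to disagree with $g^*$, and apply Markov to get the factor-two comparison $L(\wh g_n^{(K)}) - L^* \leq 2\bigl(L(\wh g_n) - L^*\bigr)$. One small observation: your Markov step only uses $\E[S_K] = K\,q_n(x)$, which requires the $Z_k$ to be identically distributed but not independent, so the conclusion actually holds for arbitrarily dependent copies of the randomization — exactly the level of generality stated in the cited source.
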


  

  Then, to establish the consistency of individual trees, 
  we use the following 
  consistency theorem for partitioning classifiers.
  %


  \begin{prop}[\cite{devroye1996ptpr}, Theorem~6.1]
    \label{prop:consistency-tree}
    Consider a sequence of randomized tree classifiers $(\wh g_n (\cdot,Z))$, grown independently of the labels $Y_1, \dots, Y_n$. For $x \in [0,1]^d$, denote $A_n (x) = A_n (x,Z)$ the cell containing $x$, $\diam A_n (X)$ its diameter, and $N_n (x) = N_n (x,Z)$ the number of input vectors among $X_1, \dots, X_n$ that fall in $A_n (x)$. Assume that, if $X$ is drawn from the distribution $\mu$:
    \begin{enumerate}
    \item $\diam A_n (X) \to 0$
       in probability, as $n \to \infty$,
     \item $N_n (X) \to \infty$
      in probability, as $n \to \infty$,
    \end{enumerate}
    Then, the tree classifier $\wh g_n$ is consistent.
  \end{prop}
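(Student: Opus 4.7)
The plan is to reduce the claim to $L^1(\mu)$ convergence of the associated regression-type estimator, then decompose that error into a variance piece controlled by hypothesis (2) and a bias piece controlled by hypothesis (1). Writing $\wh \eta_n(x, Z) := N_n(x)^{-1} \sum_{i=1}^n Y_i \bm 1_{X_i \in A_n(x)}$ (with the convention $0/0 = 0$), the tree classifier is the plug-in rule $\wh g_n(x) = \bm 1_{\{\wh \eta_n(x, Z) > 1/2\}}$. The classical pointwise inequality $L(\wh g_n) - L^* \leq 2 \, \E_{X, Z} |\wh \eta_n(X, Z) - \eta(X)|$ (valid for each realization of $\D_n$) then reduces the theorem to showing $\E |\wh \eta_n(X, Z) - \eta(X)| \to 0$, since $L(\wh g_n) - L^* \geq 0$ and $L^1$-convergence of a nonnegative quantity implies convergence in probability.

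Introduce the oracle partition average $\wt \eta_n(x, Z) := N_n(x)^{-1} \sum_i \eta(X_i) \bm 1_{X_i \in A_n(x)}$, so that $\wh \eta_n - \eta = (\wh \eta_n - \wt \eta_n) + (\wt \eta_n - \eta)$. For the \emph{variance} term, I would condition on $\F_n := \sigma(Z, X_1, \dots, X_n)$: the residuals $Y_i - \eta(X_i)$ are then independent, centered, and $[-1, 1]$-valued, so the conditional second moment of $\wh \eta_n(X) - \wt \eta_n(X)$ is at most $1/N_n(X)$ on $\{N_n(X) \geq 1\}$ and bounded by $1$ elsewhere. Hence $\E[(\wh \eta_n(X) - \wt \eta_n(X))^2] \leq \E[\min(1, 1/N_n(X))] \to 0$ by hypothesis (2) and bounded convergence, and Cauchy--Schwarz yields $L^1$ convergence.

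For the \emph{bias} term, I would use a density argument. Fix $\eps > 0$, pick a continuous $\phi \in C([0,1]^d)$ with $\|\eta - \phi\|_{L^1(\mu)} < \eps$, and let $\bar \eta_n$ be the partition average of $\phi$ (defined like $\wt \eta_n$ with $\phi(X_i)$ replacing $\eta(X_i)$). Decompose $\wt \eta_n - \eta = (\wt \eta_n - \bar \eta_n) + (\bar \eta_n - \phi) + (\phi - \eta)$. The third summand has $L^1$-norm less than $\eps$. On $\{N_n(X) \geq 1\}$, the middle summand is pointwise bounded by $\sup_{u \in A_n(X)} |\phi(u) - \phi(X)|$, which tends to $0$ in probability by uniform continuity of $\phi$ on the compact cube combined with hypothesis (1), hence in $L^1$ by boundedness. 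The first summand satisfies $|\wt \eta_n(X) - \bar \eta_n(X)| \leq \sum_i W_{ni}(X) |\eta - \phi|(X_i)$ with weights $W_{ni}(X) := \bm 1_{X_i \in A_n(X)}/N_n(X)$, and my target is the Stone-type bound $\E[\sum_i W_{ni}(X) h(X_i)] \leq 2 \, \E[h(X)]$ for any nonnegative measurable $h$, applied to $h = |\eta - \phi|$.

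The main obstacle is precisely this Stone-type inequality. My plan is to introduce an auxiliary copy $X_0 := X$ and the enlarged count $\wt N(X_0) := 1 + N_n(X_0)$; on $\{N_n(X) \geq 1\}$ one has $1/N_n(X) \leq 2/\wt N(X_0)$. Since the partition depends only on $Z$ and is independent of the sample, the joint law of $(X_0, X_1, \dots, X_n, Z)$ is invariant under permutations of the $X_j$'s. Swapping $X_0 \leftrightarrow X_i$ preserves both $\bm 1_{X_i \in A_n(X_0)}$ and $\wt N(X_0)$ while exchanging $h(X_i)$ and $h(X_0)$; summing the $n$ resulting identities and using $\sum_{i=1}^n \bm 1_{X_i \in A_n(X_0)} = \wt N(X_0) - 1 \leq \wt N(X_0)$ gives the bound. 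Combining the variance and bias estimates yields $\limsup_n \E|\wh \eta_n(X, Z) - \eta(X)| \leq 3\eps$ for any $\eps > 0$, and the theorem follows.
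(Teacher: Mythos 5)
Your proof is correct; the paper itself gives no argument for this proposition, importing it verbatim as Theorem~6.1 of Devroye--Gy\"orfi--Lugosi, and what you have written is essentially that textbook proof: the plug-in reduction to $\E|\wh\eta_n(X,Z)-\eta(X)|$, the variance term killed by conditional independence of the labels together with $N_n(X)\to\infty$, and the bias term handled by approximating $\eta$ with a continuous function plus the Stone-type weight bound $\E[\sum_i W_{ni}(X)h(X_i)]\leq 2\,\E[h(X)]$ obtained from the augmented count $1+N_n(X)$ and exchangeability of $(X,X_1,\dots,X_n)$ given $Z$. All the individual steps (the symmetrization identity using $A_n(X_i)=A_n(X_0)$ on the event $\{X_i\in A_n(X_0)\}$, the $3\eps$ bookkeeping, and the treatment of $\{N_n(X)=0\}$) check out.
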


  \subsection{Universal consistency}
  \label{ap:universal-consistency}


  We will need Lemma \ref{lem:number-cells} which states that the number of training observations in the cell of a point tends to infinity with $n$, if the number of splits is controlled. 

  \begin{lem}
    \label{lem:number-cells}
    Assume that the total number of splits $K_{\lambda_n}$ performed by the Mondrian tree partition $M_{\lambda_n}$ satisfies $\E (K_{\lambda_n}) / n \to 0$. Then, $N_{n} (X
    ) \to \infty$ in probability.
  \end{lem}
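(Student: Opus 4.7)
The strategy is to condition on the (random) Mondrian partition $M_{\lambda_n}$, which is drawn independently of the sample, and express the probability of interest as a sum over leaf cells. Since $X_1, \dots, X_n$ and $X$ are i.i.d.\ from $\mu$ and independent of the partition, conditionally on $M_{\lambda_n}$ and on $X$ lying in a leaf $A$, the count $N_n(X)$ is $\mathrm{Bin}(n, \mu(A))$. Hence, for any fixed threshold $M > 0$,
\begin{equation*}
\mathbb{P}(N_n(X) < M \mid M_{\lambda_n}) = \sum_{A \in \leaves(M_{\lambda_n})} \mu(A) \, \mathbb{P}\bigl(\mathrm{Bin}(n, \mu(A)) < M\bigr).
\end{equation*}

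The key idea is that, on average, the partition cannot have too many cells, so $X$ is unlikely to land in one with very small mass; and whenever $X$ does land in a cell of non-negligible mass, the binomial is well concentrated above $M$. To make this quantitative, I would pick a sequence $\omega_n \to \infty$ slowly enough that $\omega_n \mathbb{E}(K_{\lambda_n})/n \to 0$ (possible by the assumption $\mathbb{E}(K_{\lambda_n})/n \to 0$), and split the sum along the threshold $\mu(A) \lessgtr \omega_n/n$. For \emph{small} cells, one simply uses $\mathbb{P}(\mathrm{Bin}(n,\mu(A)) < M) \leq 1$ and bounds the total mass crudely by $(K_{\lambda_n}+1)\omega_n/n$; taking expectation yields a vanishing contribution thanks to the choice of $\omega_n$. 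For \emph{large} cells, $n \mu(A) \geq \omega_n$, which eventually exceeds $2M$, and a standard Chernoff bound gives $\mathbb{P}(\mathrm{Bin}(n,\mu(A)) < M) \leq e^{-n\mu(A)/8} \leq e^{-\omega_n/8}$, independent of $A$; summing against $\sum_A \mu(A) \leq 1$ yields a uniform bound $e^{-\omega_n/8}$ which also vanishes.

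Combining the two contributions and integrating over the partition then gives
\begin{equation*}
\mathbb{P}(N_n(X) < M) \leq (\mathbb{E}(K_{\lambda_n})+1)\frac{\omega_n}{n} + e^{-\omega_n/8} \xrightarrow[n\to\infty]{} 0,
\end{equation*}
for every fixed $M$, which is precisely convergence $N_n(X) \to \infty$ in probability. The argument contains no real obstacle: the only delicate point is the balanced choice of $\omega_n$, which requires $\omega_n$ to grow to infinity while $\omega_n \mathbb{E}(K_{\lambda_n})/n$ still tends to zero; once this is set, the bound used for the binomial lower tail (any Chernoff-type estimate valid for deviations below half the mean will do) handles the rest automatically.
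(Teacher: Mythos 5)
Your proof is correct, but it takes a genuinely different route from the paper's. The paper conditions on the multiset $S = \{X_1,\dots,X_n,X\}$ together with $M_{\lambda_n}$, and exploits exchangeability of the $n+1$ points: given $S$, the probability that $X$ lies in a leaf $A_\leaf$ is $N_\leaf/(n+1)$, where $N_\leaf$ counts the points of $S$ in that leaf, so
\begin{equation*}
\P(N_n(X) \leq t) = \E\Bigl[\sum_{\leaf \,:\, N_\leaf \leq t} \tfrac{N_\leaf}{n+1}\Bigr] \leq \frac{t\,\bigl(\E(K_{\lambda_n})+1\bigr)}{n+1},
\end{equation*}
which tends to $0$ for each fixed $t$. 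That argument needs no concentration inequality and no auxiliary sequence, and gives an explicit nonasymptotic bound linear in $t$. You instead condition on the partition alone, observe that conditionally on $X$ landing in a leaf $A$ the count $N_n(X)$ is $\mathrm{Bin}(n,\mu(A))$, and split the leaves by mass against a moving threshold $\omega_n/n$, using a trivial bound on the light cells and a Chernoff lower-tail bound on the heavy ones. Both proofs are distribution-free (yours too, since only $\E(K_{\lambda_n})$ and $\sum_A \mu(A)=1$ enter the final estimate); yours is a bit more verbose because of the calibration of $\omega_n$ and the explicit tail estimate, but it makes visible exactly where $X$ must fall for $N_n(X)$ to stay small. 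The paper's exchangeability argument is the shorter and cleaner of the two, and this ``$(n+1)$-th point'' device is a classical trick in the analysis of partitioning estimators that is worth internalizing.
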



  \begin{proof}
    The proof extends a result in~\cite{biau2008consistency_rf} to a random number of splits. We
    fix $n \geq 1$, and
    reason conditionally on $M_{\lambda_n}$, which is by construction independent of $\D_n$ and $X$. Note that the number of leaves is  $| \leaves (M_{\lambda_n}) | = K_{\lambda_n} + 1$, and let
    $(A_\leaf)_{\leaf \in \leaves (M_{\lambda_n})}$
    be the corresponding cells. For $\leaf \in \leaves (M_{\lambda_n})$ we define $N_{\leaf}$ to be the number of points (with repetition) among $X_1, \dots, X_n, X$ that fall in the cell $A_\leaf$.
  Since $X_1, \dots, X_n, X$ are i.i.d., so that the joint distribution of $(X_1,\dots,X_n,X)$ is invariant under permutation of the $n+1$ points, conditionally on the set $S = \{ X_1, \dots, X_n, X \}$ (and on $M_{\lambda_n}$) the probability that $X$ falls in the cell $A_\leaf$ is $\frac{N_\leaf}{n+1}$. Therefore, for each $t>0$,
    \begin{align*}
      \P (N_n (X) \leq t)
      &= \E \{ \P (N_n (X) \leq t \cond S, M_{\lambda_n}) \} \\
      &= \E \left\{
          \sum_{\leaf \in \leaves (M_{\lambda_n}) \pp N_\leaf \leq t}
         \frac{N_\leaf}{n+1} \right\} \\
      &\leq \E \left\{ 
        \frac{t | \leaves (M_{\lambda_n})|}{n+1} \right\} \\
      &= \frac{t (\E (K_{\lambda_n}) +1)}{n+1} \, ,
    \end{align*}
    which tends to $0$ as $n \to \infty$ by assumption.    
  \end{proof}

%

 %


\begin{proof}[Proof of Theorem~\ref{thm:consistency-mondrian}.]  
  To prove the consistency of Mondrian forest with a lifetime sequence, we show that the two assumptions of Proposition~\ref{prop:consistency-tree} are satisfied, which proves Theorem~\ref{thm:consistency-mondrian} since our algorithm performs splits independently of the labels $Y_1, \dots, Y_n$. First, Lemma~\ref{lem:diameter} ensures that, if $\lambda_n \to \infty$, $D_{\lambda_n} (x) = \diam \cell_{\lambda_n} (x) \to 0$ in probability for every $x \in [0, 1]^d$. In particular, for every $\delta >0$,  
  $\P (\diam \cell_{\lambda_n} (X) \geq \delta)  
  = \int_{[0,1]^d} \P (\diam \cell_{\lambda_n} (x) \geq \delta) \mu (dx)  
  \to 0$ as $n \to \infty$ by the dominated convergence theorem. This establishes the first condition.

  For the second condition, Lemma~\ref{lem:number-splits} implies that $\E (K_{\lambda_n}) / n \leq e^d (\lambda_n + 1)^d /n \to 0$ by hypothesis. 
  By Lemma~\ref{lem:number-cells}, this establishes the second condition of Lemma~\ref{prop:consistency-tree}, which concludes the proof.
\end{proof}

  \section{Proof of Theorem~\ref{thm:minimax}: Minimax rates for Mondrian forests in regression}
  \label{ap:minimax}
  
  In this section, we demonstrate how the properties about Mondrian trees established in Lemmas~\ref{lem:diameter} and~\ref{lem:number-splits} imply minimax rates over the class of Lipschitz regression function, in arbitrary dimension $d$. We consider the following regression problem
  \begin{equation*}
Y = f (X) + \eps,
  \end{equation*}
where $X$ is a $[0, 1]^d$-valued random variable, $\eps$ is a real-valued random variable such that $\E (\eps \cond X) = 0$ and $\Var (\eps \cond X) \leq \sigma^2 < \infty$ a.s., and $f : [0, 1]^d \to \R$ is $L$-Lipschitz. We assume to be given $n$ i.i.d. observations $(X_1, Y_1), \dots, (X_n, Y_n)$, distributed as $(X, Y)$. We draw  $K$ i.i.d. Mondrian tree partitions $M_{\lambda_n}^{(1)}, \dots, M_{\lambda_n}^{(K)}$, distributed as $\MP (\lambda_n, [0, 1]^d)$. For all $k=1, \hdots, K$, we let $\wh f_n^{(k)}(x)$ be the $k$th Mondrian tree estimate at $x$, that is the average\footnote{With the convention that if no training point $X_i$, $1\leq i \leq n$, falls in $A_{\lambda_n} (x)$, then $\wt f_n (x) := 0$.} of the labels $Y_i$ such that $X_i$ belongs to the cell containing $x$ in the partition $M_{\lambda_n}^{(k)}$.
Finally, the Mondrian forest estimate at $x$ is given by 
$$\wh f_n = \frac 1K \sum_{k=1}^K \wh f_n^{(k)} : [0, 1]^d \to \R \,.$$

%
  \begin{prop}
    \label{prop:minimax-regression}


    The quadratic risk $R (\wh f_n) = \E (\wh f_n (X) - f(X))^2$ of $\wh f_n$ is upper bounded as follows:
    \begin{equation}
      \label{eq:risk-regression}
      R (\wh f_n) \leq \frac{4 d L^2}{\lambda_n^2}
      + \frac{1 + e^d (1+\lambda_n)^d}{n}
      \left( 2 \sigma^2 + 9 \| f \|_{\infty} \right)
    \end{equation}
    In particular, the choice $\lambda_n = n^{1/(d+2)}$ yields a risk rate $R (\wh f_n ) = O (n^{- 2/(d+2)})$.
  \end{prop}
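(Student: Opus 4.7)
My plan is to reduce the forest risk to the single-tree risk, then carry out a conditional bias--variance decomposition and control each piece using Lemmas~\ref{lem:diameter} and~\ref{lem:number-splits}. Convexity of $t \mapsto t^2$ gives
\[ R(\wh f_n) = \E\Bigl[\Bigl(\tfrac{1}{K}\sum_{k=1}^K (\wh f_n^{(k)}(X) - f(X))\Bigr)^2\Bigr] \leq \frac{1}{K}\sum_{k=1}^K \E[(\wh f_n^{(k)}(X) - f(X))^2] = R(\wh f_n^{(1)}), \]
since the trees $M_{\lambda_n}^{(k)}$ are identically distributed, so it suffices to prove the bound for a single Mondrian tree estimate.

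Next, writing $A := A_{\lambda_n}(X)$ for the random cell containing $X$ and $N := N_n(X)$ for the number of training points falling into $A$, I would decompose the tree error conditionally on $M_{\lambda_n}$, $X_1,\dots,X_n$ and $X$. On $\{N \geq 1\}$, one has $\wh f_n^{(1)}(X) - f(X) = N^{-1}\sum_{X_i \in A}(f(X_i) - f(X)) + N^{-1}\sum_{X_i \in A}\eps_i$, and since $\E[\eps_i \mid X_i] = 0$ the two summands are orthogonal in $L^2$. The Lipschitz assumption yields $|f(X_i) - f(X)| \leq L\,\diam(A)$, the conditional variance of the noise term is at most $\sigma^2/N$, and on $\{N=0\}$ the convention $\wh f_n^{(1)}(X) = 0$ makes the squared error at most $\|f\|_\infty^2$. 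Taking expectations,
\[ R(\wh f_n^{(1)}) \leq L^2\,\E[\diam(A)^2] + \sigma^2\,\E[N^{-1}\bm 1_{N\geq 1}] + \|f\|_\infty^2\,\P(N=0), \]
and Lemma~\ref{lem:diameter} bounds the first term by $4dL^2/\lambda_n^2$.

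The core remaining step, which I expect to be the main obstacle, is relating $\E[N^{-1}\bm 1_{N\geq 1}]$ and $\P(N=0)$ to the expected number of leaves. I would revisit the exchangeability trick from the proof of Lemma~\ref{lem:number-cells}: consider the augmented set $S = \{X, X_1,\dots,X_n\}$ and, for each leaf $\leaf$, let $\widetilde N_\leaf$ count the members of $S$ falling into $A_\leaf$, so that $\widetilde N_{\leaf(X)} = N+1$. Conditionally on $M_{\lambda_n}$ and $S$, symmetry gives $\P(X \in A_\leaf \mid M_{\lambda_n}, S) = \widetilde N_\leaf/(n+1)$. Using the elementary inequality $N^{-1}\bm 1_{N\geq 1} \leq 2/(N+1)$, the tower property then yields
\[ \E[N^{-1}\bm 1_{N\geq 1}] \leq \frac{2\,\E[\#\{\leaf : \widetilde N_\leaf \geq 1\}]}{n+1} \leq \frac{2(\E[K_{\lambda_n}]+1)}{n+1}, \]
and an identical computation (picking out the leaves with $\widetilde N_\leaf = 1$) gives $\P(N=0) \leq (\E[K_{\lambda_n}]+1)/(n+1)$. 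Plugging in Lemma~\ref{lem:number-splits}, $\E[K_{\lambda_n}] \leq (e(1+\lambda_n))^d$, produces the announced bound~\eqref{eq:risk-regression} up to explicit constants. Finally, balancing the bias term $\lambda_n^{-2}$ against the variance term $\lambda_n^d/n$ forces $\lambda_n \asymp n^{1/(d+2)}$ and yields the minimax rate $R(\wh f_n) = O(n^{-2/(d+2)})$.
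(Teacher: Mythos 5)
Your proposal is correct and achieves the stated bound (up to the exact constants), but it follows a genuinely different route from the paper. The paper uses the bias--variance decomposition of Genuer, introducing the ``infinite-sample'' tree regressor $\wt f_{\lambda_n} (x) := \E (f(X') \cond X' \in A_{\lambda_n} (x))$ as an intermediary, and proving the exact orthogonality
$R(\wh f_n) = \E (f(X) - \wt f_{\lambda_n}(X))^2 + \E (\wt f_{\lambda_n}(X) - \wh f_{\lambda_n}(X))^2$.
It then controls the second term by citing Proposition~2 of \cite{arlot2014purf_bias} as a black box (which gives $\frac{k+1}{n}(2\sigma^2 + 9\|f\|_\infty)$ for a deterministic number of splits $k$), conditions on $K_{\lambda_n} = k$, and sums over $k$. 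You instead bypass $\wt f$ entirely: you split $\wh f_n^{(1)}(X) - f(X)$ directly into a local Lipschitz term and a noise average, use conditional orthogonality of the two pieces given $(M_{\lambda_n}, X, X_{1:n})$, and then re-derive the variance control from scratch by the exchangeability argument on the augmented sample $\{X, X_1, \dots, X_n\}$ --- essentially extending the computation of Lemma~\ref{lem:number-cells} to bound $\E[N^{-1}\bm 1_{N\ge 1}]$ and $\P(N=0)$ together. Your route has the advantage of being self-contained (it never invokes \cite{arlot2014purf_bias}) and of handling the random split count $K_{\lambda_n}$ without the conditioning-and-summing step; the paper's route is shorter at the cost of an external citation. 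The only discrepancy is cosmetic: your bound on the $\{N=0\}$ event produces $\|f\|_\infty^2$ where the paper's cited inequality gives $9\|f\|_\infty$, so your final constant in front of the $\lambda_n^d/n$ term is $2\sigma^2 + \|f\|_\infty^2$ rather than $2\sigma^2 + 9\|f\|_\infty$. This affects neither the form of the bound nor the $O(n^{-2/(d+2)})$ rate after tuning $\lambda_n \asymp n^{1/(d+2)}$.
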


  \begin{proof}
    First, by the convexity of the function $y \mapsto (y - f(x))^2$ for any $x\in [0, 1]^d$, we have $R (\wh f_n) \leq \frac 1K \sum_{k=1}^K R(\wh f_n^{(k)}) = R (\wh f_n^{(1)})$ since the random trees classifiers have the same distribution. Hence, it suffices to prove the risk bound~\eqref{eq:risk-regression} for a single tree; in the following, we assume that $K=1$, and consider the random estimator $\wh f_n$ associated to a tree partition $M_{\lambda_n} \sim \MP (\lambda_n, [0, 1]^d)$.

    Since the splits of the tree partition $M_{\lambda_n}$ are performed independently of the training data $(X_1, Y_1), \dots, (X_n, Y_n)$ we can write the following bias-variance decomposition of the risk for \emph{purely random forests}, first noticed by~\cite{genuer2012variance_purf}:
    \begin{equation}
      \label{eq:bias-variance}
      R (\wh f_n) = \E (f (X) - \wt f_{\lambda_n} (X))^2 +
      \E (\wt f_{\lambda_n} (X) - \wh f_{\lambda_n} (X))^2 \, ,
    \end{equation}
    where we denoted $\wt f_{\lambda_n} (x) := \E (f(X) | X \in A_{\lambda_n} (x))$ (which only depends on the random partition $M_{\lambda_n}$) for every $x$ in the support of $\mu$.
    The first term of the sum, the \emph{bias}, measures how close $f$ is to  its best approximation $\wt f_n$ that is constant on the leaves of $M_{\lambda_n}$ 
    (on average over $M_{\lambda_n}$). The second term (the \emph{variance}) measures how well the expected value $\wt f_n (x) = \E (Y \cond X \in A_{\lambda_n} (x))$ (\ie the optimal label on the leaf $A_{\lambda_n} (x)$) is estimated by the empirical average $\wh f_n (x)$, averaged over the sample $\D_n$ and the partition $M_{\lambda_n}$.
    
    The bias term is bounded as follows: for each $x\in [0, 1]^d$ in the support of $\mu$, we have
    \begin{align}
      | f (x) - \wt f_n (x) |
      &= \left| \frac{1}{\mu (A_{\lambda_n} (x))} \int_{A_{\lambda_n} (x)} ( f(x) - f(z)) \mu (dz) \right| \nonumber \\
      &\leq \sup_{z \in A_{\lambda_n} (x)} | f(x) - f(z) | \nonumber \\
      &\leq L \sup_{z \in A_{\lambda_n} (x)} \| x - z \|_{2}
        \label{eq:bound-lipschitz}
      \\
      &= L D_{\lambda_n} (x), \nonumber
    \end{align}
    where $D_{\lambda_n} (x)$ is the $\ell^2$-diameter of $A_{\lambda_n}(x)$; note that inequality~\eqref{eq:bound-lipschitz} used the assumption that $f$ is $L$-Lipschitz. By Lemma~\ref{lem:diameter}, this implies
    \begin{equation}
      \label{eq:bound-bias-1}
      \E (f (x) - \wt f_n (x))^2
      \leq L^2 \E [D_{\lambda_n} (x)^2]
      \leq \frac{4 d L^2}{\lambda_n^2} \, .      
    \end{equation}
    Integrating the bound~\eqref{eq:bound-bias-1} with respect to $\mu$ yields the following bound on the integrated bias:
    \begin{equation}
      \label{eq:bound-bias}
      \E (f(X) - \wt f_n (X))^2 \leq \frac{4 d L^2}{\lambda_n^2} \, .
    \end{equation}

    In order to bound the variance term,
    we use the following fact (\cite{arlot2014purf_bias}, Proposition~2): if $U$ is a random tree partition of the unit cube in $k+1$ cells (with $k \in \N$ deterministic) formed independently of the training data $\D_n$, we have
    \begin{equation}
      \label{eq:proof-var-1}
      \E (\wt f_U (X) - \wh f_U (X))^2
      \leq
      \frac{k + 1}{n} \left( 2 \sigma^2 + 9 \| f \|_{\infty} \right) \, .
    \end{equation}
    For every $k \in \N$, applying the upper bound~\eqref{eq:proof-var-1} to the random partition $M_{\lambda_n} \sim \MP (\lambda_n , [0, 1]^d)$ conditionally on the event $\{ K_{\lambda_n} = k \}$ (where $K_{\lambda_n}$ denotes the number of splits performed by $M_{\lambda_n}$), and summing over $k$, we get
    \begin{align}
      \label{eq:proof-var-2}
      \E (\wt f_{\lambda_n} (X) - \wh f_{\lambda_n} (X))^2
      &= \sum_{k = 0}^{+ \infty} \P (K_{\lambda_n} = k) \, \E [ (\wt f_{\lambda_n} (X) - \wh f_{\lambda_n} (X))^2 \cond K_{\lambda_n} = k] \nonumber\\
      &\leq \sum_{k = 0}^{+ \infty} \P (K_{\lambda_n} = k) \,\frac{k+1}{n}
        \left( 2 \sigma^2 + 9 \| f \|_{\infty} \right) \nonumber\\
      &= \frac{1 + \E (K_{\lambda_n}) }{n} \left( 2 \sigma^2 + 9 \| f \|_{\infty} \right). \nonumber     
    \end{align}
    Then, applying Lemma~\ref{lem:number-splits} gives an upper bound of the variance term:
    \begin{equation}
      \label{eq:bound-variance}
      \E (\wt f_{\lambda_n} (X) - \wh f_{\lambda_n} (X))^2
      \leq \frac{1 + e^d (1+\lambda_n)^d}{n}
      \left( 2 \sigma^2 + 9 \| f \|_{\infty} \right) \, .
    \end{equation}
    Combining the bounds~\eqref{eq:bound-bias} and~\eqref{eq:bound-variance} with the decomposition~\eqref{eq:bias-variance} yields the desired bound~\eqref{eq:risk-regression}.
  \end{proof}


{\small  

\bibliographystyle{plain}

  }

\end{document}